\newcommand{\titel}
{Adams Conditioning and Likelihood Ratio Transfer Mediated Inference}
\newtheorem{theorem}{Theorem}[subsection]
\newcommand{\midd}{\:|\:}
\newcommand{\PFP}{\ensuremath{\mathit{P\!F}_{\!P}}}
\newcommand{\BA}{\ensuremath{\textit{BA}}}
\newcommand{\Md}{\ensuremath{\textit{Md}}}
\newcommand{\Sign}{\ensuremath{\textit{Sign}}}
\newcommand{\FE}{\ensuremath{\textit{FE}}}
\newcommand{\LR}{\ensuremath{\textit{LR}}}
\newcommand{\TOF}{\ensuremath{\textit{TOF}}}
\newcommand{\MOE}{\ensuremath{\textit{MOE}}}
\title{\titel}
\author{
	Jan A.\ Bergstra 	\\
\\
  {\small
	  Informatics Institute,
	  University of Amsterdam}\\
	{\small \url{email: j.a.bergstra@uva.nl, janaldertb@gmail.com}
		}
}
\date{\small{August 10, 2018}}
\begin{document}
\maketitle

\begin{abstract}
Bayesian inference as applied in a legal setting  is about belief transfer and involves a 
plurality of agents and  communication protocols. 

A forensic expert (FE) may communicate to a trier of fact (TOF) first its value of a certain 
likelihood ratio with respect to FE's belief state as represented by a probability function  on FE's proposition space. 
Subsequently FE communicates its recently acquired confirmation that a certain evidence proposition  is true. 
Then TOF performs  likelihood ratio transfer mediated reasoning thereby revising their own belief state.

The logical principles involved in likelihood transfer mediated reasoning are discussed in a setting where 
probabilistic arithmetic is done within a meadow,
and with Adams conditioning placed in a central role.
\\[4mm]
\emph{Keywords and phrases:}
Boolean algebra, meadow,  likelihood ratio, Adams conditioning, 
 Bayesian conditioning, imprecise probability.
\end{abstract}

\tableofcontents
\section{Introduction}
The work in this  paper has been triggered by the following question stated in Lund \& Iyer~\cite{LundI2016}: 
why not separately communicate the two likelihoods that make up a likelihood ratio? 
A proposal for an answer to this question is given in Theorem~\ref{AdamsBayes2} and the subsequent comments. 
The question highlights the fact that Bayesian reasoning is about communication, as well as about probability.

Courtroom reasoning 
involving Bayesian inference takes the form of a protocol by means of which a forensic expert (FE) may interact 
with a trier of fact (TOF). The setup requires that both FE and TOF maintain their own space $S_\FE$, resp.\ 
$S_\TOF$ of propositions, and
that both maintain a belief state that is formalized as a probability function $P_\FE$ resp.\ $P_\TOF$ on the respective 
proposition spaces. It is customary to view such probabilities with the paradigm of subjective probability.

When a single probability function is used the model supposedly involves precise beliefs. When collections of 
probability functions are made use of, so-called non-singleton representors, 
the model admits imprecise beliefs. Imprecise beliefs
may be helpful or may even be needed when besides uncertainty, the realm of probability functions, 
also ignorance is being modelled. Modelling uncertainty with non-singleton representors is only 
done under the assumption that ignorance and uncertainty are different to such an extent that
 ignorance cannot be adequately represented by means
of the same probability function that is used for the representation of an agent's uncertainty. Comments 
regarding imprecise beliefs in connection with Bayesian inference are delayed until Paragraph~\ref{imprecise} below. 

Following a tradition initiated in forensics by Lindley (e.g.~\cite{Lindley1977}) and Evett 
(e.g. see~\cite{Evett2016}), who in turn based their 
work on the principles of subjective probability as set out by  Ramsey, de Finetti, Carnap, and Jeffrey, 
the work of several 
contemporary authors shows a commitment to the exclusive usage of precise belief states:
Berger \& Slooten~\cite{BergerSl2016}, Berger et.\ al.\ \cite{BergerBCEJ2011} and Biedermann~\cite{Biedermann2015, BiedermannV2016}. 
Independently of forensics, 
theory development concerning
precise beliefs has advanced in different directions, for instance in Diaconis \& Zabell~\cite{DiaconisZ1982}, 
Bradley~\cite{Bradley2005}, Gyenis~\cite{Gyenis2014}, and Yalcin~\cite{Yalcin2016}. I will make use of 
Bradley's presentation of Adams conditioning in~\cite{Bradley2005}.

The objective of the paper is as follows: (i) to develop a formalisation of the simplest applications of Bayesian reasoning in courtroom practice
in the tradition initiated by Lindley and Evett (the so-called Lindley framework) in 
terms of a protocol for the flow of information about likelihoods and evidence that is produced 
by an agent $B$ (e.g. FE, acting as a forensic expert as will be discussed later) so that this information 
  can be incorporated in the belief state of an agent $A$ (e.g. TOF, acting as a trier of fact), (ii) to develop a uniform description of 
  transformations of belief states based on Adams conditioning, including a new account of the fallacy of transposition of the conditional,  
  (iii) to provide an answer to the question 
  mentioned in the first line of the introduction, (iv) to draw a methodological conclusion regarding the plausibility of the requirement that 
  both TOF and FE strictly adhere to subjective probability theory based on precise probabilities: 
  this requirement is incompatible with single message reporting by the forensic expert, (v) to suggest a way out of this incompatibility,
  and finally, (vi) to do the technical work  in equational logic
  thereby providing a contribution to the development of probability theory in the context of signed meadows. 
  
 The paper is specific for the legal context in the sense that the protocol suggested below for the transfer of probabilistic has been designed with the
 purpose in mind to provide a detailed model of what transpires from a quite extensive and nevertheless 
 rather informal literature about the Lindley framework.
 The general problem how an agent may incorporate testimonial information it its belief state is more general than the particular approach of
 the Lindley framework and reaches far beyond the legal domain. 
 Both inside and outside the legal context other approaches and methods exist for that matter. This work focuses exclusively on aspects of the
 Lindley framework, which is understood as an instance of likelihood transfer mediated reasoning. As a justification for this focus I mention the fact that
 the assumption that for each agent involved all probabilities are subjective and must be precise is controversial. 
 Forensic experts for instance are inclined  to understand their results as delivering approximations of ``really existing probabilities'', delivered 
 together with a margin of error, rather than as
 mere expressions of their subjective judgements for which, by definition, the notion of an error or a margin of error 
 does not even make sense. 
 
  Initial results concerning the development of probability theory in equational logic can be found  in  
Bergstra \& Ponse~\cite{BergstraP2016} and Bergstra~\cite{Bergstra2016}. 
Meadows as an approach to arithmetic data types have arisen from the equational theory of abstract data types, 
starting with  Bergstra \& Tucker~\cite{BergstraT2007}. Signed meadows are introduced in Bergstra, Bethke \& Ponse~\cite{BBP2013}.
A complete axiomatization for the meadow of real numbers is established in~Bergstra, Bethke \& Ponse~\cite{BBP2015}, 
and is applied in the context of probabilities in  Bergstra \& Ponse~\cite{BergstraP2016}.

The paper will provide several abstract versions of elementary 
communication protocols between TOF and MOE such as are present in the so-called Lindley framework.
 Ignoring variations on the theme I 
will write as if one may refer with ``Lindley framework''
to a definite position which arose from the works of Ramsey, de Finetti, Carnap, Lindley, Evett, 
and which is now represented by authors including Berger, Biedermann, and Taroni. In each protocol
the transfer of information about likelihoods or likelihood ratios plays a role, as well 
as the processing of information thus obtained, and therefore I will speak of likelihood ratio transfer mediated
reasoning (LRTMR).\footnote{%
The central role of likelihood ratios in reporting in forensic science (which includes forensic 
science based forensic practice) is strongly emphasized in the ENFSI  guidelines (Willis et al.\ \cite{Willis2015}).} 


\subsection{Trier of fact (TOF) and mediator of evidence (MOE)}
In a forensic context the trier of fact (TOF) has the role of deciding about 
the truth of certain statements. Below such statements are referred to as hypothesis propositions. 
The TOF role may be played by a judge 
or by a jury. TOF may say ``guilty'' (or ``not guilty''). The TOF may  be in need of a science backed interpretation of available evidence. 
Providing such information is delegated to the forensic expert. In order to obtain a more generally applicable  
presentation I will speak of a mediator of evidence (MOE) rather than of a 
forensic expert (FE). TOF and MOE are the two major roles in any account of likelihood ratio transfer mediated reasoning.

\subsection{Background assumptions concerning LRTMR}
\label{WHyp}
This paper is written on the basis of certain assumptions and guidelines which merit being mentioned in advance.
\begin{enumerate}
\item The description of TOF side reasoning is  done having in mind the paradigm of subjective probability theory 
with precise  probabilities for quantifying the strength of an agent's belief. Concerning MOE no strict 
commitment to subjective probability theory is assumed.

\item We are describing abstract reasoning patterns that are based on the so-called Lindley framework. We do not claim that these abstractions 
conclusively capture the logical aspects of the Lindley framework. 
This framework is biased towards providing a tool for
as well as an explanation of TOF side reasoning. Arguably different reasoning frameworks underly the reasoning of  MOE or any other relevant agents. 
While operating in interaction different agents may yet entertain different conceptions of 
probability theory and statistics and exchange corresponding alphanumerical data. 

\item  Becoming aware of the very extensive foundational literature on probability theory is a challenge. 
I have no basis for the claim that the results in this paper are new except that I did not yet
find these results in this form elsewhere. 
More specifically I must be quite cautious with making any claims regarding the 
the philosophical and methodological adequacy, 
and of course the novelty and originality of the following collection of notions, methods, and suggestions 
which are introduced and used in the paper, and all of which I did not find in existing work, at least not in an explicit form.
\begin{enumerate}
\item the use of single likelihood Adams conditioning as a method for TOF to process an incoming update of a likelihood by 
MOE which complies with the requirements of subjective belief theory,
\item  the use of double likelihood Adams conditioning by TOF for processing an update of a likelihood ratio received from MOE. 
 In Paragraph~\ref{limitedM} we have some comments on the validity of this transformation from the perspective of subjective probability theory,
\item the use of an auxiliary and temporary additional proposition space,  in which incoming 
evidence information is processed by way of Bayesian conditioning with proposition kinetics, 
thereby producing a probabilistic figure which allows subsequent Jeffrey conditioning on TOF's main proposition space,
 thereby avoiding proposition kinetics on TOF's main propositions space,
 \item  the distinction between likelihood pair and likelihood ratio, viewing a likelihood pair as one of many possible 
 representations of a likelihood ratio,

 \item the notions of local representation independence and global representation independence 
 (for methods for processing update information on likelihood ratios),
\item  the notion of single message reporting (by MOE), and the suggestion that single message 
reporting is impossible for an MOE who strictly adheres to the principles of subjective probability theory, 
\item the notion of multiple message reporting by MOE, and the suggestion that both TOF and MOE may comply with the requirements of 
subjective probability theory, both agree to separate in time  
the transmission from  MOE to TOF of a  likelihood pair,   
from the subsequent transmission of an assertion.

\end{enumerate}
\item The transformations of probability functions as discussed in the paper are  relevant in their own right, even if the portrayed 
role of these transformations in the interaction between TOF and MOE is considered problematic, 

\item A calculus or notation for belief states amounts to no more than a supportive tool for TOF, 
and the calculus is  merely a toolkit. Fenton, Neil \& Berger\cite{FentonNB2016}  
expect that in actual application TOF will make use of automated support, and that the numerical calculation of 
actual beliefs and of belief state revisions will not become 
a task for human agents.

\end{enumerate}

\section{Probability calculus on the basis of involutive meadows}
\label{DBZ}
A meadow is a structure for numbers equipped with either a name (and one or more notations)  
for the multiplicative inverse function (inversive notation) or with a name (and one or more notations)
 for the division function
(divisive notation). Given either name with notations the other name and notations can be introduced as an abbreviation.
For instance one may start with $x^{-1}$ as a notation for the multiplicative inverse function and write $x/y $ as an abbreviation for $x \cdot y^{-1}$. 

From the perspective of forensic science it is wholly immaterial whether or not there is a 
name in one's language for a mathematical function, in this case division. 
Only when formalizing the logic the presence or absence of names acquires relevance.

Once a name and notation (say division with notation $x/y$ given arguments $x$ and $y$) 
has been introduced the question 
``what is $x/0$''
may be posed and is entitled to an answer. Assigning a value or meaning to
$x/0$ can be done in at least 6 different ways, each of which has been 
amply investigated in the mathematical and logical literature about the status the multiplicative inverse of zero. 
A straightforward idea, which is adopted in this paper, is to work
under the assumption that $ x/0=0$. This convention must not be portrayed as a non-trivial  insight about numbers and division, 
which has been overlooked by mainstream mathematics until now, so to say. 
The convention to set $x/0=0$ merely represents a definite selection (out of a number of options) 
on how to base one's logic on a formalized version of arithmetic. 
Using $x/0=0$ on top of the standard axioms for numbers 
(the axioms for a commutative ring) leads to what is called an involutive meadow in~\cite{BM2015}.

Developing precise logics for application in forensic science begins with the choice of a logic for arithmetic
as a step towards having a logic for the values serving as probabilities. 
Mathematics does not provide such logics, however, the provision of which belongs to mathematical logic. 
Working with an involutive meadow is just one option for choosing a logic of numbers. 
My preference for working with involutive meadows derives from a preference for working with 
equations and conditional equations over the explicit use of 
quantifiers which comes with the use of full first order logic.\footnote{%
I notice a remarkable absence of 
quantified formulae
 (in particular $\forall x. \Phi$ and $ \exists x. \Phi$ for a formula $\Phi$) in the forensic science literature. 
 In computer science quantifiers are used all over the place. This relative  lack of prominence of 
 quantifiers   is at first sight at odds with the frequent use of the term logic in forensics. 
 However, assuming that
 only universally quantified formulae are used, and taking into account the convention (which prevails 
 both in logic and in mathematics) to omit 
 explicit mention of quantifiers  while having universal quantification as a default 
  I entertain the view that the forensic science community implicitly shares my 
 preference for working with a fragment  of first order logic 
 rather over working with full first order logic. This fragment, however, is more expressive than equational logic, which for instance does without negation. 
 When working in equational logic the logical operators (negation, conjunction, disjunction, 
 material implication) are treated as mathematical functions. The latter convention is by no means generally accepted and 
 it comes with its own complications.} 
 
Below decimal notation will be used freely under the assumption that 
$2$ abbreviates $1+1$ and so on.\footnote{%
In Bergstra \& Ponse~\cite{BP2016b} the formalization of decimal number notation by means of
ground complete term rewriting systems, a useful shape of abstract datatype specification in 
preparation for prototyping implementations, is studied in detail.}

\subsection{Proposition spaces and probability functions}
Formulae and equations below are to be understood in the context of the specification 
	$\BA+\Md+\Sign+\PFP$ (for: Boolean algebra + meadows + sign function +
	 a probability function named $P$) taken 
	from Bergstra \& Ponse~\cite{BergstraP2016}. These equations hold for a probability 
	function with name $P$ over an event space $E$, which takes the form of a Boolean algebra of 
	events and for which  a finite collection $C$ of constants is available.\footnote{%
	For this specification a completeness theorem that was proven in Bergstra, Bethke \& Ponse~\cite{BBP2015} 
	for $\Md+\Sign$ is extended in~\cite{BergstraP2016}.
	Moreover the equational specification $\BA+\Md+\Sign+\PFP$ is extended 
	with so-called conditional values, playing the role of random variables, and expectation values in 
	Bergstra~\cite{Bergstra2016}.}
	As is common in the forensic literature I will refer to events as propositions below. Formulae, however, involve
	sentences and syntax. 
	Besides a Boolean algebra of propositions there
	is a Boolean algebra of sentences.

Throughout the paper I will use Jeffrey's $t(\bullet)$ notation for lambda abstraction with a single variable: 
given a context $t[-]$: $t(\bullet) = \lambda x. t[x]$.
\subsection{Conditional probability: variations on a theme}
	Following~\cite{BergstraP2016}   $P^0(x \midd y)$ is defined by 
	\[P^0(x \midd y) = \frac{P(x \wedge y)}{P(y)}\]
	 The
	superscript $^0$ indicates that  $P^0(x \midd y) = 0$ whenever $P(y) = 0$.
In~\cite{BergstraP2016} several other options for defining conditional probabilities in the case
that $P(y) = 0$ are taken into account are discussed. For instance, 
\[P^1(x \midd y)= 	 \displaystyle \frac{P(x \wedge y) - P(y)}{P(y)} + 1 \]
$P^1(-)$ satisfies: $P(y) = 0 \to 	P^1(x \midd y) = 1$, which fits well with material implication for two-valued logic.
 When dealing with Bayesian conditionalization safe conditional probability, written as 
$P^S(x \wedge y)$, may be helpful:
\[P^S(x \midd y)= 	 \displaystyle \frac{P(x \wedge y) -P(y) \cdot P(x)}{P(y)} + P(x)\]
The advantage of safe conditional probability is that $P^S(\bullet \midd y) = \lambda x . P^S(x \midd y)$ (using Jeffrey's 
$P^S(\bullet \midd y)$ notation as a ``dedicated'' form of lambda abstraction, as was stated above) 
 is a probability function  for all $y$, which is not the case for $P^0(\bullet \midd y)$ and for $P^1(\bullet \midd y)$.
Denoting with $\uparrow$ the ``undefined'' outcome of a function,
 the conventional notion of  a conditional probability due to Kolmogorov reads as follows:
\[P^\uparrow(x \midd y)= 	  \frac{P(x \wedge y)}{P(y)} \lhd P(y) \rhd  \uparrow.\]
Here $\displaystyle a\lhd \, b\, \rhd c$ stands for \emph{if b $\ne$ 0 then a else c}. 	
The conditional operator allows a straightforward definition in an involutive meadow:\footnote{%
In the presence of $\uparrow$ weaker equations are needed: such as 
$x \lhd 0 \rhd z = z,~x \lhd 1 \rhd z = x,~ x \lhd \uparrow \rhd z =  \uparrow$, and $  x \lhd y \rhd z =x \lhd \frac{y}{y}\rhd z.$}
 \[x \lhd y \rhd z = \frac{y}{y}\cdot x + (1 -\frac{y}{y})\cdot z\]
Notwithstanding the fact that $P^\uparrow(-)$ 
corresponds best with what ordinary school mathematics has to say about division, properly formulating
 its logic is far more involved than developing the logic needed to work with 
$P^0(-|-)$ or with $P^1(-|-)$ or $P^S(-|-)$.\footnote{%
The choice made below for using $P^0(-|-)$ rather than $P^1(-|-)$  or $P^S(-|-)$ is merely a matter of taste.} 
Using the conditional operator the definitions of $P^1(- | -)$ and $P^S(-|-)$ can be made more illuminating:
\[P^1(x \midd y)= 	\frac{P(x \wedge y)}{P(y)}  \lhd P(y) \rhd 1 ~~\textrm{and}~~
P^S(x \midd y)= 	  \frac{P(x \wedge y)}{P(y)} \lhd P(y) \rhd P(x).\]

The literature on conditional probabilities taking probability zero
for the condition into account is quite complex. Popper functions, nonstandard probabilities, 
Renyi's conditional probability, and
De Finetti's coherent conditional probability come into play. 
Working with $P^0(-)$ excludes some of these options for dealing with conditional 
probability functions, but choosing to work with an involutive meadow 
 does not by itself introduce such kind of bias. On the contrary, working in an involutive meadow 
 allows to proceed with the formalization of each of the 
mentioned options (and more) for the definitions of conditional probability functions.

\subsection{Relevance of meadows for the work in this paper}
The potential advantages of the use of meadows as the domain of values for a probability 
function  in the context of probabilistic reasoning are surveyed in the following items.
\begin{enumerate}
\item All relevant equations and conditional equations can be written such as to be valid in a two valued classical logic as understood 
for all substitutions of values for variables.\footnote{%
No attempt is made in the paper to work out these matters in full detail. In many cases instead of an assumption
that say $t \neq 0$, for a closed term $t$, it is assumed or derived that $t/t = 1$, which given the theory of meadows 
amounts to the same. The claimed advantage is that working with meadows allows to achieve 100\% 
semantical precision in these matters in principle. Compared with a  conventional style of mathematical writing 
working with meadows does not imply or induce any additional 
commitment to a formalistic and possibly overly detailed approach.}
\item Proofs of equations and of conditional equations can be given relative to the equational proof system $\BA+\Md+\Sign+\PFP$, 
often used in combination with some additional operator definitions. No separate
import of a theory of real numbers or of set theory is required.
\item 
\label{nonclassical} The following semantic problem that permeates conventional school mathematics is avoided. 
It is not uncommon to insists that in the world of rational numbers the following assertion $\Phi(x)$ holds in general. $\Phi(x)$ asserts
\[\displaystyle x \neq 0 \to x/x = 1\]
In other words $\forall x. \Phi(x)$ is considered  a valid assertion.
The idea is that difficulties regarding division by zero are excluded by the condition. 
Many presentations of Bayes' theorem implicitly presuppose this assumption. 
However, classical logic does not work that way. For the condition to play a role, it is plausible that $0$ can be substituted for $x$. 
And in order for $\forall x. \Phi(x)$ to be true it is required that $\Phi(0)$ is true, just as  all 
other substitution instances $\Phi(q)$ substituting a rational number $q$ for $x$ must hold. 
Truth of $\forall x. \Phi(x)$ implies the turth of:
\[\displaystyle \ 0 \neq 0 \to 0/0 = 1\]
In classical logic the validity of the proposition $\displaystyle \ 0 \neq 0 \to 0/0 = 1$ results from a bottom up definition 
of truth. Assuming a classical two-valued logic $0/0= 1$ must be either true or false. But conventional mathematics 
is reluctant to commit to either option. It appears that when dealing with fractions 
conventional mathematics deviates from classical first order logic.
\item Three-valued logics provide a solution to the dilemma mentioned in item~\ref{nonclassical}, but the corresponding
proof systems become harder to grasp  than probability theory proper. 
Close to conventional mathematical intuition is to work with partial functions and to formalize arithmetic using a logic of partial functions. 
Designing logics of partial functions can be done in many ways, however, while providing no obvious preferred options.

\item An alternative perspective on this matter is to maintain a temporal interpretation of implication,
thereby turning propositional calculus into a so-called short circuit logic. The idea is that if a condition $\phi$ of an implication $\phi \to \psi$  fails
the conclusion $\psi$ is left unevaluated and even its syntax or its well-definedness need not be checked.
Short-circuit logics have been worked out in ample detail in 
 Bergstra, Ponse \& Staudt~\cite{BergstraPS2013}. These details, however, are prohibitively complex for application in probability theory.
 
\item When taking numbers from a meadow the use of a conditional probability 
 $\displaystyle P(x \wedge y)/P(y)$ is always permitted and does not have the  side effect of introducing or implicitly presupposing 
 the assumption that $P(y)$ is non-zero.

An example may clarify the relevance of fact. In colloquial language and within informal 
mathematics an agent $A$ may ask an agent  $B$ for $B$'s guess of the conditional probability that some agent $C$ has sold a
particular object $F$ under the condition that $C$ has stolen that same object. 

Now, when working in a conventional and informal style,  by merely speaking  of this conditional probability,
 $A$ already implicitly states 
(requires, assumes) that the probability  that $C$ has stolen the object $F$ is non-zero. 
Having a meadow in mind $A$ is not committed to such an assumption.

\item The assertion that $P_A(H) \neq 0$ may be expressed as $P_A(H)/P_A(H) =1$. However, 
there is still a problem with this assertion because the constraint  $P_A(H)/P-A(H) =1$ cannot be conveniently expressed in the 
language of probability functions with precise beliefs. From the perspective  of subjective probability with beliefs 
represented by precise probability functions the assertion that $P_A(\texttt{C-has-stolen-F}) > 0$ 
is  informative about $A$'s belief, while at the same time it is uninformative as an 
exclamation by $A$ about their own state of belief. Indeed,  
at any time $A$ should be able, by definition of the concept of subjective probability, to encode
their level of uncertainty concerning ``$\texttt{C has stolen F}$''
in a probability function with precise values, e.g. by stating that $P_A(\texttt{C-has-stolen-F})= 0$ 
or by stating that  $P_A(\texttt{C-has-stolen-F})= 10^{-10}$. 
But questions remain: can it be the case that for some rational $r$,
$A$'s belief is $P_A(\texttt{C-has-stolen-F})= r$,
while representing $r$ precisely as a fraction is unfeasible for $A$ because it requires an extremely long expression? 
If so, can it be the case that $A$ cannot do better than to communicate an interval $P_A(\texttt{C-has-stolen-F})> a$ 
with rational $a$ chosen in such a way that denoting it is feasible?
\end{enumerate}

\subsection{Additional equations and conditional equations}
\label{AdditionalEandCE}
The completeness result of 
Bergstra \& Ponse ~\cite{BergstraP2016} allows to use all conditional equations which
are true in the meadow of real numbers. These are derivable from the finite equational theory 
$\BA+\Md+\Sign+\PFP$.
 
It should be noted that although $\BA+\Md+\Sign+\PFP$ is sound and 
complete for the case of real number valued probability functions, the same axiom system is merely sound but not complete for
rational number valued probability functions. For instance $(x^2-1)/ (x^2-1) =1$ is true in the meadow of rationals but is not derivable from
$\BA+\Md+\Sign+\PFP$.

The question whether or not a finite and complete equational axiomatisation of the 
equational theory of the meadow of rational numbers exists (as mentioned e.g. in~\cite{BBP2015}) is still open and so is the 
corresponding question for signed meadows in the presence of probability function.
It is also open whether or not a 
computably enumerable complete axiomatisation of the equational theory of the meadow of rational numbers exists.

\subsection{Transformations of proposition spaces and belief functions}
\label{trafos}
The work in this paper is sensitive to the precise shape of proposition spaces and probability functions on 
proposition spaces. Rather than working out these matters by way of 
preparation to the sequel of the paper, in full detail, I will limit this
 presentation to the mentioning of scattered aspects, while giving definitions by way of 
 representative examples rather than in a more general notational form. 
 
 The proposition space of an agent, say $A$, will be denoted with $S_A$. If it is known that the proposition 
 space is generated by primitive propositions, say $H,L,M,N$, I will write $S_A=S_A(H,L,M,N)$, 
 if there are only two generators one has e.g.\ $S_A(L,M)$ or $S_A(H,L)$.
 
 A belief function  $P_A$ (supposedly encoding beliefs of agent  $A$ at some moment of time) 
 maps each proposition in a proposition space to a value in a number space, 
 for which the meadow of rational numbers will be chosen below. 
 More sophisticated work may call for the meadow of reals.

 A belief function is best thought of as a pair $(S_A,P_A)$, though  the domain $S_A$ will often be left implicit. 
 Transformations of such pairs are core of Bayesian inference. The 
 following transformations on belief functions will play a role in this paper.
\begin{description}
\item [Bayes conditioning (without proposition kinetics).] Let for example $S_A=S_A(L,M,N)$. 
Suppose $P_A(M) = p$ with $p >0$. Then $\widehat{P_A}$ is obtained by Bayes 
conditioning if it satisfies the following equation:
\[\widehat{P_A} = P_A^0(\bullet \midd M)\]
This abbreviates that for all $X \in S_A$, $\widehat{P_A}(X) = P_A^0(X \midd M)$. 
It follows that $\widehat{P_A}(M)=1$ and the proposition space is left unaffected.\footnote{%
Bayes conditioning comes under alternative names: Bayes conditioning, Bayes' conditioning, 
Bayes conditionalization, Bayes' conditionalization, Bayesian conditioning, 
Bayesian conditionalization. In this paper only Bayes conditioning and Bayesian conditioning is used.}

\item [Bayes conditioning with proposition kinetics.] Let once more $S_A=S_A(L,M,N)$. 
Suppose $P_A(M) = p$ with $p >0$. Then $(S_A(L,N),\widehat{P_A})$ is obtained by Bayes
conditioning if $\widehat{P_A}$ satisfies the following equation:
\[\widehat{P_A} = P_A^0(\bullet \midd M)\]

Bayes conditioning with proposition kinetics 
removes $M$ from $S_A$ with the effect that after Bayes conditioning with respect to $M$ the 
proposition space has been reduced to $S_A=S_A(L,N)$. 

\item [Bayes conditioning on a non-primitive proposition.] Let, again by
way of example, the proposition space of $A$ have three generators: $S_A=S_A(L,M,N)$. 
Suppose $\Phi$ is a closed propositional sentence making use of primitives $L,N,$ and $M$. 
Suppose $P_A(\Phi) = p$ with $p >0$. Then $\widehat{P_A}$ is obtained by Bayes 
conditioning on $\Phi$  if it satisfies the following equation:
\[\widehat{P_A} = P_A^0(\bullet \midd\Phi)\]
When conditioning on a non-primitive proposition kinetics does not apply, i.e.\ the 
proposition space is left as it was.
\item [Jeffrey conditioning.] Let for example $S_A=S_A(L,M,N)$. Suppose $P_A(M) >0$. 
Then $\widehat{P_A}$ is obtained by Jeffrey 
conditioning if for some $p \in [0,1]$ it satisfies the following equation:
\[\widehat{P_A} = p \cdot P_A^0(\bullet \midd M) +  (1-p) \cdot P_A^0(\bullet \midd \neg M)\]
Jeffrey conditioning involves no proposition kinetics. Bayesian conditioning without proposition kinetics may 
be understood as the version of Jeffrey
 conditioning  with $p=1$.\footnote{%
Jeffrey conditioning has finite as well as infinitary versions. According to Diaconis \& Zabell~\cite{DiaconisZ1982} 
only its infinitary versions are stronger than any Bayesian rules.}

\item [Proposition space reduction.] Consider $S_A=S_A(L,M,N)$, one may wish to 
forget about say $M$. Proposition kinetics now leads to a reduced proposition space $S_A(L,N)$ 
in which only the propositions generated by $L$ and $N$ are left. 

Proposition space reduction constitutes the simplest form of proposition kinetics.
\item [Parametrized proposition space expansion.] Let $S_A=S_A(H)$. 
One may wish to expand $S_A$ to a proposition space by introducing $M$ to it in such a 
manner that a subsequent reduct brings one back in $S_A$.

$P_A(H)$ is left unchanged but $P_A(H \wedge M)$ and $P_A(\neg H \wedge M)$ must be fixed with definite values.
A specification of the new probability function, say $Q_A$, with domain $S(H,M)$, is thus: 
$Q_A(H) = P_A(H), Q_A(H \wedge M)= q_1, Q_A(\neg H \wedge M)= q_2$ with $q_1$ and $q_2$ 
appropriate rational number expressions. If one intends to extend $S_A=S_A(L,M)$ to $S_A=S_A(L,M,N)$ 
four additional values for the probability functions are needed and so on.

\item [Symmetric proposition space expansion.] Let $S_A=S_A(N,H)$. One may wish to expand $S_A$ to a 
proposition space by introducing $M$ to it in such a manner that a subsequent reduct brings one back in 
$S_A$ but one may not wish to guess any parameters. Now it suffices to assert that for each 
closed propositional expression $\Phi$ over the propositional primitives $N$ and $H$,  
$Q_A(M \wedge \Phi)= Q_A(M \wedge \neg \Phi)$, 
in other words all parameters are chosen with value $\frac{1}{2}$.

\item [Base rate inclusion.] This is a special case of parametrized proposition space expansion, and a 
generalization of symmetric proposition space expansion. 
Let $p$ be a closed value expression with $p>0$, 
and assume that $BR_h$ is a new proposition name. $BR_h$ is introduced in order to include the 
base rate $p$ (for some relevant type of event, named $h$) in the probability function. 

The probability function is extended as follows: $Q_A(\mathit{BR}_h \wedge \Phi)= p  \cdot Q_A(\Phi)$, 
for all sentences $\Phi$ not involving $\mathit{BR}_h$.

\item[Single likelihood Adams conditioning.]
Let $0< l \leq 1$ be  a rational number, (given by a closed expression for it). 
Assume that $H$ and $E$ are among the generators of $S_A$. Single likelihood 
Adams conditioning leaves the proposition space unchanged and 
transforms the probability function $P_A$ to $Q_l$ (leaving out the subscript $A$ for ease of notation).
\begin{equation*} Q_l =  P_A(H \wedge E \wedge \bullet) \cdot \frac{l}{P_A^0(E \midd H)}+
P_A(H \wedge \neg E  \wedge \bullet ) \cdot \frac{1-l}{P_A^0(\neg E \midd H)}+P_A( \neg H \wedge \bullet) \end{equation*}
\item[Double likelihood Adams conditioning.]
 Let $0< l, l^\prime \leq 1$ be  two rational numbers, (each given by a closed meadow expressions). 
 Assume that $H$ and $E$ are among the generators of $S_A$. 
 Double likelihood Adams conditioning leaves the proposition space unchanged and 
transforms the probability function $P_A$ to $Q_{l,l^\prime}$.\\
$\displaystyle  Q_{l,l^\prime} =  P_A(H \wedge E \wedge \bullet) \cdot \frac{l}{P_A^0(E \midd H)}+
 P_A(H \wedge \neg E  \wedge \bullet ) \cdot \frac{1-l}{P_A^0(\neg E \midd H)}+\\ 
 \quad~~~~~~~~~~ P_A(\neg H \wedge E \wedge \bullet) \cdot \frac{l^\prime}{P_A^0(E \midd  \neg H)}+
P_A(\neg H \wedge \neg E  \wedge \bullet ) \cdot \frac{1-l^\prime}{P_A^0(\neg E \midd  \neg H)}  $
\end{description}
 
\subsection{A labeled transition system of credal states}
\label{ltscs}
A pair $(S_A,P_A)$ may be understood as the logical counterpart of an agent $A$'s state of beliefs.
As $A$ may have beliefs not captured as assertions in $S_A$, $(S_A,P_A)$  is often 
referred to as $A$'s partial beliefs or as
 $A$'s partial state of beliefs.

In other words $(S_A,P_A)$ contains (as elements of $S_A$) and quantifies (via $P_A$) only some of the agent's beliefs. 
During a reasoning process $(S_A,Q_A)$
plays the role of a credal state in a model of the kinetics (dynamics) of $A$'s credences. 
Two credal states $(S_A,P_A)$ and 
$(S_A,Q_A)$ are called compatible if the same propositions (or rather sentences) 
of $S_A$ have probability $0$ (and thus $1$) under $P_A$ as under $Q_A$.

Now the collection $CS_U$ is defined as consisting of all  credal states with a Boolean algebra that is 
generated by a finite subset $W$ of a countable set $U$  of propositional atoms, 
and with a probability function taking values in the meadow of rational numbers..

Each of the transformations as outlined above in Paragraph~\ref{trafos} may be viewed as a (conditional) rule which generates 
 transitions between credal states. Transitions are labeled by the rule involved plus the parameters that are used for a specific transition.
 
 Labels are derived from rules, and the label created from a  rule
contains information about the name of the transformation and possibly of parameters, 
while the transition itself is between the prior and posterior state of the transformation. 
We will use the following labels (with $M \in U$ and $p$ a rational number in $[0,1]$):
\begin{itemize}
\item 
Bayes conditioning without proposition kinetics has a single parameter, and a single condition. The label is
$[bc;E]$ and the condition is $[P_A(E) > 0]$. 

The transition according to this rule requires of the 
prior credal state $(S_A(W),P_A)$ (with $W\subseteq U$),  that 
$E \in W$ and that $P_A(E) =p$. So if $P_A(E) > 0$ there is a transition 
 $(S_A(W),P_A) \to (S_A(W),P_A^0(\bullet \mid E))$ with label $[bc;E]$. 
\item Bayes conditioning with proposition kinetics generates transitions with label  $[bcpk;E]$ under the condition $[P_A(E) > 0]$.
\item Bayes conditioning on a nonprimitive proposition has label  $[bcnp;H]$ and condition $[P_A(H) > 0]$. 
\item Jeffrey conditioning has label $[jc;M]$.
\item Proposition space reduction has label $[psr,M]$.
\item Parametrized proposition space expansion has label $[ppse; M,N_1,\ldots,N_n,p_1,\ldots,p_{2^n}]$. 
Here $N_1,\ldots,N_n$ is an enumeration without repetition of elements of $U$ serving as generators for the 
proposition space at hand, and 
$p_1,\ldots,p_{2^n}$ enumerates, in a predetermined order, 
the new probabilities $\widehat{P_A}(M \wedge \phi)$ for $\phi$ an expression 
in conjunctive normal form over the mentioned generators.
\item Symmetric proposition space expansion has label $[sppse;M]$. 
This is just parameterized proposition space expansion with all $p_i$ equal to $1/2$.
\item Base rate inclusion has label $[bri;M;p]$. This rule generalises symmetric proposition space expansion and specializes 
parametrised proposition space expansion.
\item Single likelihood Adams conditioning has label $[slac;E,H;l]$, with $l$ a rational number in $(0,1]$.
\item Double likelihood Adams conditioning has label $[dlac;E,H;l,l^\prime]$, with $l, l^\prime$  rational numbers in $(0,1]$.

\end{itemize}

$\mathrm{CS}_A(U)$ denotes the collection of finite credal states for agent $A$ with a
proposition space generated by elements of  $U$. $\mathrm{CS}^{lts}_A(U)$ is $\mathrm{CS}_A(U)$ 
equipped with the structure of a labeled transition system, using the labeling mentioned above. 
This definition is limited by the requirement that quantities are rational. 
These definitions have obvious counterparts in case real numbers are used.

It follows from an observation in Diaconis \& Zabell~\cite{DiaconisZ1982} that if  $\Psi= (S_A, P_A)$ and 
$\Psi^\prime =(S_A,Q_A)$ 
are compatible credal states  it is possible to make a path of transitions $\mathrm{CS}^{lts}_A(U)$ in  
from $\Psi$ to $\Psi^\prime$ in two steps,
a parameterized proposition space expansion step followed by a Bayesian conditioning step.

%
%
%
%

\section{Belief kinetics and likelihood ratio transfer}
Likelihood ratio transfer mediated reasoning (LRTMR)  refers to a spectrum of reasoning patterns
used at the receiving side of probabilistic information.\footnote{%
Logical aspects  of courtroom proceedings worth of formal scrutiny and involving probabilistic information arise  in various contexts. 
For instance the implicit proof rule for the probability of a conjunction as listed in 
Arguello~\cite{Arguello} seems to be wrong and the reasoning pattern discussed by Stephens in~\cite{Stephens2011} 
lies outside the patterns considered below.} I will use LRTMR as a container for abstract  formalizations of various patterns of 
Bayesian reasoning.

In order to emphasize the general nature of the protocols and methods for LRTMR, 
and in order to simplify the presentation of expressions and proofs, I will use $A$ instead of TOF and $B$ instead of MOE.

In this Section it is assumed that the proposition space of $A$ is left unchanged during the reasoning process. 
In other words, there is only belief kinetics but no proposition kinetics.\footnote{%
In the literature on subjective probability theory instead of belief kinetics the phrase belief dynamics is used
as an alternative,
and instead of proposition kinetics the phrase proposition dynamics occurs.}
The outline of LRTMR in Paragraph~\ref{outline} below will serve as a point of departure for some more technical work.

\subsection{Evidence transfer mediated reasoning and the Taxi Color Case}
\label{TCC}
The simplest Bayesian 
reasoning pattern involving $A$'s reaction to and way of processing of information obtained from $B$ 
occurs if $A$ maintains a proposition 
space $S_A(E,H)$ and a belief function $P_A$ defined on it for which $P_A(E) = p > 0$.
In these circumstances  
it may happen that $B$ sends to $A$ a message to the extent that according to $B$ the proposition
$E$ is valid, or in other words that $P_B(E) = 1$. 
$A$ trusts $B$ and intends to adopt $B$'s belief that $E$ is true. 
A can do so by updating its belief function. Thus $A$ reacts to the input from $B$ by
performing Bayesian conditioning on $E$, thereby revising its belief function to $\widehat{P_A} = P^0(\bullet \midd E)$.

An example of this reasoning pattern occurs in the so-called Taxi Color Case as specified in detail in
Schweizer~\cite{Schweizer2006}.\footnote{%
Schweizer~\cite{Schweizer2006} contains a detailed description of the taxi color case, together with a useful survey of 
precise terminology in German about forensic reasoning patterns involving likelihood transfer and 
Bayesian conditioning. I will decorate the case description with some additional details. In
Schweizer~\cite{Schweizer2013} the similar bus color scenario is mentioned in an exposition concerning the 
legal value of base rates.} 
In a town (here TCCC, Taxi Color Case City), in total 1000 taxis circulate, 
150 of which are green and 850 of which are blue. A witness $W$ stated 
that (s)he saw a defendant $D$ leave with a green taxi from 
a specific location, more specifically $W$ saw $D$  departing with the first taxi in the taxi queue in front of restaurant $R$ at 23.00 PM.

Simplifying the case in comparison to Schweizer's description it is assumed that 
$A$ maintains an estimated base rate of 80\% for the correctness of $W$'s testimony. According to $A$'s 
background knowledge it may be expected in general for a witness operating in the conditions of $W$ at the 
time of the reported event that the witness (not just the actual witness $W$ but rather some 
average of test candidates) will report the color of the taxi correctly with a 
probability of 80\%.\footnote{%
In Schweizer's description, in contrast,  
$B$ investigates the statement of the witness, including an investigation concerning $W$'s ability to correctly report
about the color of a taxi including for instance (my details) information regarding the position from where she 
claimed to have been standing at the alleged time of $D$'s departure by taxi, and 
taking into account the overall illumination of the scene.}

$A$ is supposed to work with a proposition space with two propositions: 
$H$ (the hypothesis proposition asserting that $D$ left with a green taxi from the mentioned place and at the mentioned time), 
and $E$ 
(the evidence proposition asserting that according to $W$'s testimony $D$ left with a green taxi). 
$A$ uses, lacking other data, the 
base rate on operational taxi's (irrespective of location and time) of 150/1000 to set $P_A(H) = 150/1000$,
and $A$ uses the base rate of 80\% valid reporting (for both colors) to set: 
$P_A^0(E \midd H) = 80\%$ and $P_A^0( \neg E \midd \neg H) = 80\%$ whence $P_A(E \midd \neg H) = 100 \%-80\%= 20\%$, so that 
$P_A(E \wedge H) = P_A^0(E \midd H) \cdot P_A(H) = 80/100 \cdot 150/1000= 12/100$ and 
$P_A(E \wedge \neg H) = P_A^0(E \midd \neg H) \cdot P_A(\neg H) = 20/100 \cdot 850/1000 = 17/100$.
It follows that $P_A(E) = P_A(E \wedge H) + P_A(E \wedge \neg H)= 12/100 + 17/100 = 29/100$. 
$(S_A(E,H),P_A)$ serves as a prior belief state (credal state) for $A$.

Now one assumes that $A$ obtains evidence from $B$ in the form of $B$'s assertion ($P_B(E) =1$) that 
$W$ made a testimony which may be 
faithfully rendered at the relevant level of abstraction as amounting to $E$, so that $A$ may now assume that $E$ is true.
$A$ intends to adopt ($P_A(E) =1$) in its belief state, which must therefore be modified to, say $\widehat{P_A} = \widehat{P_A}(\bullet)$
(the function which assigns $\widehat{P_A}(X)$ to proposition $X$).
Given the acquired additional information $A$ may revise the quantitative 
consequences of its prior adoption of base rates, by
applying Bayesian conditioning (without proposition kinetics) relative to $E$. As $P^0_A(E) > 0$ a
 transition with label $[bc;E]$ can take place
from $(S_A(E,H),P_A)$ to $(S_A(E,H),\widehat{P_A})$ with $\widehat{P_A} = P^0(\bullet \midd E)$ (the
probability  function which assigns probability $P^0(\bullet \midd E)$ to proposition $X$). 
Now $A$ may calculate (rather than guess by introspection) its new belief in $H$:
\[\widehat{P_A} (H) = P_A^0(H \midd E) 
= \frac{P_A^0(E \midd H) \cdot P_A(H)}{P_A(E)}= \frac{80/100 \cdot 150/100}{29/100}
= \frac{12/100}{29/100}=12/29\]
The new  belief of $A$ in $H$ significantly exceeds the base rate of 150/1000 which was $A$'s prior belief in $H$. 
$A$ becoming aware of its new belief in $H$ concludes its episode of Bayesian reasoning as exemplified in this case.

\paragraph{The role of likelihoods and likelihood ratio.}
The example, as presented here and in contrast with Schweizer's presentation, 
 makes no use of the transfer of a single likelihood or  of a pair or a quotient of 
likelihoods by $B$. However, $A$ encounters two likelihoods
$$l = P_A^0(E \midd H) = 80/100 \quad \mathrm{and} \quad l^\prime =P_A^0(E \midd \neg H) = 20/100=1/5$$
and, implicitly the likelihood ratio $r = l/l^\prime = 4$.
 
The likelihoods $l$ and $l^\prime$ provide $A$  with information about the expected reliability of a witness in the case at hand. 
It is plausible that $B$ has obtained a value for $l$ and $l^\prime$ from elsewhere.  
Remarkably, as it turns out $r$ provides 
about  as much useful information for $A$ as the likelihood pair $(l,l^\prime)$

\paragraph{An example of likelihood ratio transfer.}
$A$ may take  $r=P_A^0(E \midd H)/P_A^0(E \midd \neg H)$ for a definition of $r$ and may
 investigate what value to assign to $r$ only after becoming involved in the case at hand. 
For instance $A$ may ask $B$ to investigate this figure and to report about it. Such an investigation may range from
measuring a base rate by averaging the performance of a sample consisting of several agents not themselves involved in the case, 
to a laboratory based examination of the relevant performance
of the actual witness $W$. In the taxi color case, $B$ may report to $A$ that 
$P_B^0(E \midd H) = 90\%$ and $P_A^0(E \midd \neg H) = 10\%$, i.e. a likelihood ratio of $9$. 
Then $A$ can adopt this information from $B$, and 
adapt its probability distribution to accommodate these figures and thereafter incorporate the information that $P_B(E)=1$ with the result that
(e.g. using Theorem~\ref{AdamsBayes}$\,$(5) below) a higher value for $\widehat{P_A}(H)$ is obtained:
\[\widehat{P_A}(H) =\frac{ 9 \cdot P_A(H)}{1 + (9-1) \cdot P_A(H)} = \frac{ 9 \cdot 15/100}{1 + (9-1) \cdot 15/100}= 135/220=27/44 > 12/29 \] 
In subsequent paragraphs a variety of interactions between $A$ and $B$  is considered where, in advance of providing 
evidence information regarding $E$,  $B$ determines a likelihood 
pair $P_B^0(E \midd H)$ and $P_B^0(E \midd \neg H)$ and subsequently conveys these either in separate steps, 
or in a single step as a pair or in a single step while merely transferring the ratio of both, in each case with the intent of
overruling the respective likelihoods or the likelihood ration which are given by $A$'s prior belief function. 

\paragraph{Further remarks on TCC}
The term ``likelihood'' is merely another word for conditional probability used in specific circumstances. 
In the example $A$ plays the role of TOF and $B$ plays the role of MOE.
The event of a single taxi departure in TCC provides 
a remarkably nice case study for theoretical work as it allows an amazing range of further details and significant complications, to mention:

(i) different methods for determining witness reliability,
the presence of multiple witnesses, witnesses with different reliability and 
with conflicting assertions, 
(ii) taking other colours, car model information, or 
number plate information into account, (iii) taking taxi management, scheduling, and monitoring into account, and
(iv) using improved base rate estimations.

\subsection{Outline of the LRTMR reasoning pattern}
\label{outline}
Often the term likelihood is used to denote a certain  conditional probability. We write $L_\alpha^0$ for a likelihood and 
$LR_\alpha^0$ for a particular ratio of likelihoods, commonly referred to as a likelihood ratio.
\[L^0_\alpha(X,Y)= P_\alpha ^0(X \midd Y) \quad \mathrm{and} \quad LR^0_\alpha(X,Y,\neg Y)= \frac{L^0_\alpha(X,Y)}{L^0_\alpha(X,\neg Y)}\]
It is now assumed that both $E$ and $H$ are among the generators of both proposition 
spaces $S_A$ and $S_B$. Further $A$ and $B$ have prior credal states $(S_A,P_A)$ and $(S_B,P_B)$. 
The reasoning protocol LRTMR involves the following steps:
\begin{enumerate}
\item It is checked by $B$  that $0<P_B(H)<1$ and $0 <P_B(E) <1$, otherwise $B$ raises an exception and the protocol aborts.\item $B$ 
determines the value $r$ of the likelihood ratio $\displaystyle \LR^0_B(E,H,\neg H) =\\ 
\frac{L^0_B(E,H)}{L^0_B(E,\neg H)} = \frac{P_B^0(E \midd H)}{P_B^0(E \midd \neg H)}$
with respect to its probability function $P_B$.
\item If $L^0_B(E,\neg H) = 0$ then  $B$ raises an exception and the protocol aborts.
\item \label{transfer}
$B$ communicates to $A$ the value $r$ and a description of $\LR^0_B(E,H,\neg H)$, that is a description of what
propositions $r$ is an \LR~of.
\item $B$ communicates its newly acquired information to $A$ that it now considers $P_B(E) = 1$, i.e. $E$ being true, 
to be an adequate representation of the state of affairs.
\item $A$ trusts $B$ to the extent that $A$ prefers those of $B$'s quantitative values that $B$ communicates 
during a run of the protocol over its own values for the same probabilities, likelihoods, and likelihood ratios.
\item $A$ takes all information into account and applies Bayesian conditioning to end up with its posterior belief 
function $\widehat{P_A}$ which satisfies:
\begin{equation}\widehat{P_A}(H) =\frac{ r \cdot P_A(H)}{1 + (r-1) \cdot P_A(H)}\end{equation}
\end{enumerate}
$A$ becoming aware of it having updated its beliefs in accordance with the ``Bayesian''  paradigm concludes the 
protocol.\footnote{%
The protocol is normative to the extent that $A$ is supposed to follow the calculation of 
$\widehat{P_A}(H)$ rather than some subjective guess on how to update its beliefs. Thus, on the one hand $A$ is 
required to start out with a subjectively determined prior credal state, but $A$ is not entitled to enact  updates
subjectively, quite on the contrary, $A$'s updating conventions are prescribed by the protocol.}

The equation that specifies the posterior belief on $H$ is equivalent in probability calculus to the more familiar odds form of Bayes' Theorem:
\[\frac{\widehat{P_A}(H)}{\widehat{P_A}(\neg H)}= r \cdot \frac{P_A(H)}{P_A(\neg H)} \]
The proof consists of a trivial computation when using equations and conditional equations such as discussed in Paragraph~\ref{AdditionalEandCE}.

The description of LRTMR is a drastic abstraction used for the purposes of the paper and diverse aspects are left 
unspecified including:
(i) has  an invitation to $B$ occurred for it to play a role in the protocol,  
(ii) how is an abort of the revision process performed when necessary, (iii) are any assumptions about the absence of 
background knowledge required for either $A$ or for $B$, (iv) making sure that checking 
various conditions does not  involve information transfer between $A$ 
and $B$ which stands in the way of the properly performing the conditioning operations? 

\subsection{Belief kinetics  I:  single likelihood Adams conditioning and representation independence}
\label{Adams:1}
I will first consider an adaptation of the protocol named SLTMR for successive (or separate) likelihood transfer mediated reasoning.
SLTMR results from LRTMR by modifying step~\ref{transfer} as follows:

First $B$ determines $l$ and $l^\prime$ such that $l = L^0_B(E,H)$, $l^\prime = L^0_B(E,\neg H)$, and $\displaystyle r = l/l^\prime$.\footnote{%
It is assumed that $l$ and $l^\prime$ are known as closed expressions with non-zero and non-negative value not in excess of 1 
for the meadow of rational numbers. This assumption is implicitly used many times below in order to
be able to apply $t/t= 1$ for various terms t. The same use is made of non-zero prior odds $P_A(H)$ and $P_A(E)$ 
which must as well be known in terms of such expression so as to guarantee $P_A(H)/P_A(H)= 1$ and
$P_A(E)/P_A(E)= 1$.}
Then $B$ successively communicates first $l$ and then $l^\prime$ to $A$, in both cases in addition to information 
concerning what sentences these values are likelihoods of.

In order to process the incoming information concerning $l$ and $l^\prime$, $A$ first applies the following transformation, 
thereby obtaining an intermediate (precise) belief function $Q_l$:
\begin{equation} Q_l =  P_A(H \wedge E \wedge \bullet) \cdot \frac{l}{P_A^0(E \midd H)}+
P_A(H \wedge \neg E  \wedge \bullet ) \cdot \frac{1-l}{P_A^0(\neg E \midd H)}+P_A( \neg H \wedge \bullet) \end{equation}
Following the exposition of Bradley~\cite{Bradley2005} this is the Adams transformation corresponding to 
an intended update of likelihood $L^0_A(E,H) $ to value $l$.

Next $A$ applies Adams conditioning to $Q_l$ in order to 
update its likelihood $L_A(E,\neg H) $ to value $l^\prime$, 
thus obtaining a second intermediate belief function $R_{l.l^\prime}$:
\begin{equation} R_{l,l^\prime} =  Q_l(\neg H \wedge E \wedge \bullet) \cdot \frac{l^\prime}{Q_l^0(E \midd  \neg H)}+
Q_l(\neg H \wedge \neg E  \wedge \bullet ) \cdot \frac{1-l^\prime}{Q_l^0(\neg E \midd  \neg H)}+Q_l (H \wedge \bullet) \end{equation}

Finally $A$ applies Bayesian conditioning to ${R_{l.l^\prime}} $ with respect to $E$, thereby obtaining its posterior beliefs 
$\widehat{P_A}$:
\begin{equation}\widehat{P_A} = \widehat{R_{l,l^\prime}} = R_{l.l^\prime}^{\,0}(\bullet \midd E) \end{equation}
The following facts can be shown concerning this sequence of three conditioning steps:
\begin{theorem}\label{AdamsBayes} 
Given the assumptions and definitions mentioned above, 
in particular assuming 
$l/l= l^\prime/l^\prime = P_A(H)/P_A(H) = P_A(E)/P_A(E) = P_A(E \wedge H)/P_A(E \wedge H)=$\\
	$P_A(E \wedge \neg H)/P_A(E \wedge \neg H)=
	P_A(\neg E \wedge H)/P_A(\neg E \wedge H) = P_A(\neg E \wedge \neg H)/P_A(E \wedge \neg H)=1 $, 
	the following identities 
are true for $l,l^\prime, r,P_A,Q_l, R_{l,l^\prime}$, 
and $\widehat{R_{l,l^\prime}}$:
	\begin{enumerate}

	\item $Q_l^0(E \midd H) = l$
	\item $R_{l,l^\prime}^0(E \midd H) = l$
	\item $R_{l,l^\prime}^0(E \midd \neg H) = l^\prime$
	\item  \label{BCdoubleL}
	$\displaystyle \frac{R_{l,l^\prime}^{\,0}(E \midd H)}{R_{l,l^\prime}^{\,0}(E \midd \neg H)} = r$
	\item \label{valueonH} $\displaystyle \widehat{R_{l,l^\prime}}(H) =\frac{ r \cdot P_A(H)}{1 + (r-1) \cdot P_A(H)}$
	\item 
$\displaystyle R_{l,l^\prime}(X)=$
 $\displaystyle P_A(\neg H \wedge E \wedge X) \cdot \frac{l^\prime}{P_A^0(E \midd \neg H)} +
P_A(\neg H \wedge \neg E \wedge X) \cdot \frac{1-l^\prime}{P_A^0(\neg E \midd \neg H)} +\\ 
~\hspace{5.3mm}\quad\quad\quad P_A(H \wedge E \wedge X) \cdot \frac{l}{P_A^0(E \midd H)}+
	 P_A(H \wedge \neg E  \wedge X ) \cdot \frac{1-l}{P_A^0(\neg E \midd H)}
$
	\end{enumerate}
\end{theorem}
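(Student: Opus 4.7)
My plan is to work through the six claims in an order that exploits dependencies between them, treating (6) as the master identity from which (2)--(4) fall out.

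First I would establish (1) by direct computation. Substituting the definition of $Q_l$ and using the fact that $P_A(H \wedge E \wedge H \wedge E) = P_A(H \wedge E)$ and that the second and third summands vanish when intersected with $H \wedge E$, one obtains $Q_l(H \wedge E) = P_A(H \wedge E) \cdot \frac{l}{P_A^0(E \mid H)}$. A parallel computation with $H \wedge \neg E$ gives $Q_l(H \wedge \neg E) = P_A(H \wedge \neg E) \cdot \frac{1-l}{P_A^0(\neg E \mid H)}$. Summing yields $Q_l(H) = l \cdot P_A(H) / P_A^0(E \mid H) \cdot P_A^0(E \mid H) / P_A(H) \cdot P_A(H) + \ldots$; more cleanly, after expanding $P_A^0(E \mid H) = P_A(E \wedge H)/P_A(H)$ and using $\frac{P_A(H)}{P_A(H)} = 1$ (guaranteed by the hypothesis on nonzero priors), one gets $Q_l(H) = P_A(H)$. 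Hence $Q_l^0(E \mid H) = \frac{l \cdot P_A(H)}{P_A(H)} = l$.

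Next I would prove (6), which is the main computational step. The idea is to substitute the three explicit values $Q_l(\neg H \wedge E \wedge x)$, $Q_l(\neg H \wedge \neg E \wedge x)$, and $Q_l(H \wedge x)$ into the defining formula for $R_{l,l^\prime}$. Since the definition of $Q_l$ leaves the $\neg H$-part untouched, we have $Q_l(\neg H \wedge E \wedge x) = P_A(\neg H \wedge E \wedge x)$ and similarly for the negation, and also $Q_l^0(E \mid \neg H) = P_A^0(E \mid \neg H)$. These substitutions immediately produce the first two terms of the claimed expression for $R_{l,l^\prime}(x)$. The third contribution $Q_l(H \wedge x)$ equals the $H$-part of the formula for $Q_l$ evaluated at $x$, namely $P_A(H \wedge E \wedge x) \cdot \frac{l}{P_A^0(E \mid H)} + P_A(H \wedge \neg E \wedge x) \cdot \frac{1-l}{P_A^0(\neg E \mid H)}$, giving the last two terms.

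With (6) in hand, claims (2), (3) and (4) are easy. For (3) I compute $R_{l,l^\prime}(E \wedge \neg H) = P_A(\neg H \wedge E) \cdot \frac{l^\prime}{P_A^0(E \mid \neg H)}$ (only the first summand survives), and $R_{l,l^\prime}(\neg H) = P_A(\neg H)$ by the same collapsing used for $Q_l(H)$ above; hence $R_{l,l^\prime}^0(E \mid \neg H) = l^\prime$. Claim (2) is analogous: the $H$-part of the formula (6) is identical to the $H$-part of $Q_l$, so $R_{l,l^\prime}^0(E \mid H) = Q_l^0(E \mid H) = l$ by (1). Dividing gives (4).

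For (5) I would apply the definition of Bayesian conditioning: $\widehat{R_{l,l^\prime}}(H) = R_{l,l^\prime}(H \wedge E)/R_{l,l^\prime}(E)$. Using (2) and (3) together with $R_{l,l^\prime}(H) = P_A(H)$ and $R_{l,l^\prime}(\neg H) = P_A(\neg H) = 1 - P_A(H)$ yields $R_{l,l^\prime}(H \wedge E) = l \cdot P_A(H)$ and $R_{l,l^\prime}(\neg H \wedge E) = l^\prime \cdot (1 - P_A(H))$. Dividing numerator and denominator by $l^\prime$ (permitted via $\frac{l^\prime}{l^\prime} = 1$) introduces the ratio $r$, and rearrangement gives $\widehat{R_{l,l^\prime}}(H) = r \cdot P_A(H) / (1 + (r-1) \cdot P_A(H))$.

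The main obstacle is bookkeeping rather than depth: each step must use $\frac{t}{t} = 1$ only for the terms whose nonzero-ness is guaranteed by the standing assumptions (strictly positive priors $P_A(H)$, $P_A(E)$, the likelihoods $P_A^0(E \mid H)$ and $P_A^0(E \mid \neg H)$, and the transferred values $l$, $l^\prime$). I would open the proof with a brief lemma collecting these $\frac{t}{t} = 1$ justifications inside the involutive meadow, so that the subsequent manipulations are clean equational rewrites rather than case analyses on zero.
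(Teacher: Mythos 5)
Your proposal is correct and uses essentially the same equational calculations as the paper: the key facts in both are that $Q_l$ leaves the $\neg H$-part of $P_A$ untouched (so $Q_l^0(E \midd \neg H)=P_A^0(E \midd \neg H)$), that $Q_l(H)=P_A(H)$ and $R_{l,l^\prime}(\neg H)=P_A(\neg H)$, and the final division by $l^\prime$ to introduce $r$. The only difference is organizational: you establish the flattened formula of item~6 first and read items 2--4 off it, whereas the paper computes items 1--5 directly from the two-stage composition and proves item~6 last; this is a harmless reordering, and your closing remark about collecting the $\frac{t}{t}=1$ justifications up front matches the paper's standing assumptions.
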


\begin{proof} The proof of Theorem~\ref{AdamsBayes} is a matter of calculation on the basis of the 
available equational axioms and definitions.
\begin{enumerate}
\item
\begin{enumerate}

\item
 $Q_l(H) $
\begin{align*}
&=(P_A(H \wedge E \wedge \bullet) \cdot \frac{l}{P_A^0(E \midd H)}+
	 P_A(H \wedge \neg E  \wedge \bullet ) \cdot \frac{1-l}{P_A^0(\neg E \midd H)}+P_A( \neg H \wedge \bullet))(H)\\
&=  P_A(H \wedge E ) \cdot \frac{l}{P_A^0(E \midd H)}+
	P_A(H \wedge \neg E) \cdot \frac{1-l}{P_A^0(\neg E \midd H)}+P_A( \neg H \wedge H) \\
&=
P_A(H \wedge E ) \cdot \frac{l \cdot P_A(H)}{P_A(E \wedge H)}+
P_A(H \wedge \neg E) \cdot \frac{(1-l) \cdot P_A(H)}{P_A(\neg E \wedge H)}\\
&=l \cdot P_A(H)+(1-l) \cdot P_A(H)\\
&=P_A(H)
\end{align*}

\item $\displaystyle Q_l(E \wedge H)$
\begin{align*}
&=(P_A(H \wedge E \wedge \bullet) \cdot \frac{l}{P_A^0(E \midd H)}+
	 P_A(H \wedge \neg E  \wedge \bullet ) \cdot \frac{1-l}{P_A^0(\neg E \midd H)}+P_A( \neg H \wedge \bullet))(H \wedge E)\\
&=P_A(H \wedge E \wedge H \wedge E) \cdot \frac{l}{P_A^0(E \midd H)}+
P_A(H \wedge \neg E \wedge H \wedge E) \cdot \frac{1-l}{P_A^0(\neg E \midd H)} +\\
&\quad \hspace{1,1mm}P_A( \neg H \wedge H \wedge E) \\
&=P_A(H \wedge E) \cdot \frac{l}{P_A^0(E \midd H)}\\
&=l \cdot P_A(H)
\end{align*}

\item $\displaystyle Q_l^0(E \midd H) = \frac{Q_l(E \wedge H)}{Q_l(H)}=\frac{l \cdot P_A(H)}{P_A(H)} = l\\$
\end{enumerate}
\item
$\displaystyle R^0_{l,l^\prime}(E \midd H) = \frac{R_{l,l^\prime}(E \wedge H)}{R_{l,l^\prime}(H)}=
	\frac{Q_l(H\wedge E \wedge H)}{Q_l(H \wedge H)} = \frac{Q_l( E \wedge H)}{Q_l( H)} = Q^0(E \midd H) = l$
\item
\begin{enumerate}
\item $\displaystyle R_{l,l^\prime}(\neg H)$
	\begin{align*}
	&=(Q_l(\neg H \wedge E \wedge \bullet) \cdot \frac{l^\prime}{Q_l^0(E \midd  \neg H)}+
	Q_l(\neg H \wedge \neg E  \wedge \bullet ) \cdot \frac{1-l^\prime}{Q_l^0(\neg E \midd  \neg H)}+Q_l (H \wedge 	
	\bullet))(\neg H)\\
	&=Q_l(\neg H \wedge E \wedge \neg H) \cdot 
	\frac{l^\prime}{Q_l^0(E \midd  \neg H)}+
	Q_l(\neg H \wedge \neg E  \wedge \neg H ) \cdot \frac{1-l^\prime}{Q_l^0(\neg E \midd  \neg H)}+
	Q_l (H \wedge \neg H) \\
	&= Q_l(E \wedge \neg H) \cdot \frac{l^\prime}{Q_l^0(E \midd  \neg H)}+
	Q_l(\neg E  \wedge \neg H ) \cdot \frac{1-l^\prime}{Q_l^0(\neg E \midd  \neg H)}\\
	&= l^\prime \cdot Q_l(\neg H) + (1-l^	\prime) \cdot Q_l(\neg H) \\
	&= Q_l(\neg H)
	\end{align*}
	\item $R_{l,l^\prime}(E \wedge  \neg H)$
	\begin{align*}
	&= (Q_l(\neg H \wedge E \wedge \bullet) \cdot \frac{l^\prime}{Q_l^0(E \midd  \neg H)}+
	Q_l(\neg H \wedge \neg E  \wedge \bullet ) \cdot \frac{1-l^\prime}{Q_l^0(\neg E \midd  \neg H)}+Q_l (H \wedge 	
	\bullet))(E \wedge \neg H)\\
	&= Q_l(\neg H \wedge  E  \wedge \neg H ) \cdot \frac{l^\prime}{Q_l^0( E \midd  \neg H)}\\
	&= Q_l( E  \wedge \neg H ) \cdot \frac{l^\prime}{Q_l^0( E \midd  \neg H)}\\
	&= Q_l( E  \wedge \neg H ) \cdot \frac{l^\prime \cdot Q_l(  \neg H)}{Q_l( E \wedge  \neg H)}\\
	&= l^\prime \cdot Q_l(  \neg H)
	\end{align*}
	
	\item $\displaystyle R^0_{l,l^\prime}(E \midd  \neg H) = 
	\frac{R_{l,l^\prime}(E \wedge \neg H)}{R_{l,l^\prime}(\neg H)}	= 
	\frac{l^\prime \cdot Q_l(  \neg H)}{Q_l(\neg H)} 	= l^\prime$
	\end{enumerate}
\item Using the preceding items and by definition of $r$.
\item \begin{enumerate}
	\item $Q_l(\neg H) = 1- Q_l(H) = 1 -P_A(H)$
	\item 
	$ R_{l,l^\prime}(E)$
	\begin{align*}
	&= (Q_l(\neg H \wedge E \wedge \bullet) \cdot \frac{l^\prime}{Q_l^0(E \midd  \neg H)}+
	Q_l(\neg H \wedge \neg E  \wedge \bullet ) \cdot \frac{1-l^\prime}{Q_l^0(\neg E \midd  \neg H)}+\\
	&  \quad \hspace{1mm} Q_l (H \wedge 	
	\bullet))(E)\\
	&=Q_l(\neg H \wedge E) \cdot \frac{l^\prime}{Q_l^0(E \midd  \neg H)}
	+ Q_l (H \wedge E)\\
	&= l^\prime \cdot Q_l(\neg H) + l \cdot P_A(H) \\
	&= l^\prime \cdot (1-P_A(H)) + l \cdot P_A(H) 
	\end{align*}
	
	\item $ \displaystyle \widehat{R_{l,l^\prime}}(H)= R^0_{l,l^\prime}(H \midd E) = 
	\frac{R_{l,l^\prime}(E \wedge H)}{R_{l,l^\prime}(E)}=\frac{Q_{l}(E \wedge H)}{R_{l,l^\prime}(E)}$
	\begin{align*}
	&= 	\frac{l \cdot P_A(H)}{l^\prime \cdot (1-P_A(H)) + l \cdot P_A(H)}
	= \frac{l/l^\prime \cdot P_A(H)}{(1-P_A(H)) + l/l^\prime  \cdot P_A(H)}\\
	&= \frac{ r \cdot P_A(H)}{1 + (r-1) \cdot P_A(H)}
	\end{align*}
	\end{enumerate}

\item 	\begin{enumerate}
\item $\displaystyle Q_l^0(E \midd \neg H) = \frac{Q_l(E \wedge \neg H)}{Q_l(\neg H)}=
	\frac{P_A(E \wedge \neg H)}{1-P_A(H)} = P^0_A(E \midd \neg H)$

 \item
 $Q_l( \neg H) $
\begin{align*}
&=(P_A(H \wedge E \wedge \bullet) \cdot \frac{l}{P_A^0(E \midd H)}+
	 P_A(H \wedge \neg E  \wedge \bullet ) \cdot \frac{1-l}{P_A^0(\neg E \midd H)}+
	 P_A( \neg H \wedge \bullet))(\neg H)\\
&=  P_A(\neg H)
\end{align*}

\item $\displaystyle Q_l(\neg E \wedge \neg H)$
\begin{align*}
&=(P_A(H \wedge E \wedge \bullet) \cdot \frac{l}{P_A^0(E \midd H)}+
	 P_A(H \wedge \neg E  \wedge \bullet ) \cdot \frac{1-l}{P_A^0(\neg E \midd H)}+
	 P_A( \neg H \wedge \bullet))(E \wedge \neg H)\\
&=P_A(\neg E \wedge \neg H) 
\end{align*}

\item $\displaystyle Q_l^0(\neg E \midd \neg H) = \frac{Q_l(\neg E \wedge \neg H)}{Q_l(\neg H)}=
\frac{P_A(\neg E \wedge \neg H)}{P_A(\neg H)} = P_A^0(\neg E \midd \neg H)$

\item $R_{l,l^\prime}(X)$
\begin{align*}
&=  Q_l(\neg H \wedge E \wedge X) \cdot \frac{l^\prime}{P_A^0(E \midd \neg H)}+
Q_l(\neg H \wedge \neg E  \wedge X ) \cdot \frac{(1-l^\prime) }{P_A^0( \neg E \midd  \neg H)}+Q_l (H \wedge X)\\
&= P_A(\neg H \wedge E \wedge X) \cdot \frac{l^\prime}{P_A^0(E \midd \neg H)} +
P_A(\neg H \wedge \neg E \wedge X) \cdot \frac{1-l^\prime}{P_A^0(\neg E \midd \neg H)} \\
&~~~+ P_A(H \wedge E \wedge X) \cdot \frac{l}{P_A^0(E \midd H)}+
	 P_A(H \wedge \neg E  \wedge X ) \cdot \frac{1-l}{P_A^0(\neg E \midd H)}
\end{align*}

\end{enumerate}
\end{enumerate}
\end{proof}

The final credal state $\widehat{P_A}(H)$ for $H$ does not depend on the way $l$ and $l^\prime$ are 
chosen so that $r = l/l^\prime$. In other words the protocol is
 independent from the way $r$ is written as a fraction. 
This form of  independence will be referred to as the local representation independence of the SLTMR 
reasoning method. Global representation independence refers to the constraint that for each proposition $L$ in the proposition space of $S_A$
it is the case that $\widehat{P_A}(L)$  does not depend on the way $l$ and $l^\prime$ are 
chosen so that $r = l/l^\prime$.

A symmetry argument yields that performing the respective Adams conditioning steps in the other 
order leads to the same result.

However, if Bayesian conditioning is performed after the first Adams transformation, the result depends on the representation of the
likelihood ratio as a fraction. To see this one may derive from the definition of $Q_l(E)$ that
$Q_l(E) =l \cdot P_A(H)+P_A(\neg H \wedge E)$ so  that 
\[\displaystyle Q^0_l(H \midd E) = \frac{Q_l(H \wedge E)}{Q_l(E)} =  
P_A(H \wedge E) \cdot \frac{l}{P_A^0(E \midd H)}\cdot \frac{1}{Q_l(E)} = \frac{l \cdot P_A(H)}{l \cdot P_A(H)+P_A(\neg H \wedge E)}\]
Given that $P_A(\neg H \wedge E)>0$, the latter result depends on $l$, that is on the choice of $l$ and $l^\prime$ given $r$. 
Thus the first stage, obtained after a single
Adams transformation is not locally representation independent, and for that reason not globally representation independent either.

\subsection{Belief kinetics II: double likelihood Adams conditioning }
\label{AdamsBayes2}
In this paragraph the proporties of double likelihood  Adams conditioning are considered in detail. This 
conditioning mechanism fits best with likelihood ratio transfer as it  
 simultaneously incorporates two likelihoods $l$ and $l^\prime$. The likelihoods
$l$ and $l^\prime$ 
 may in turn have been obtained by choosing for a given likelihood ratio $r$ (which $A$ may have received from $B$) 
appropriate values such that $r = \frac{l}{l^\prime}$:\\
$\displaystyle Q_{l,l^\prime} =  P_A(H \wedge E \wedge \bullet) \cdot \frac{l}{P_A^0(E \midd H)}+
 P_A(H \wedge \neg E  \wedge \bullet ) \cdot \frac{1-l}{P_A^0(\neg E \midd H)}+\\ 
 \quad~~~~~~~~~~ P_A(\neg H \wedge E \wedge \bullet) \cdot \frac{l^\prime}{P_A^0(E \midd  \neg H)}+
P_A(\neg H \wedge \neg E  \wedge \bullet ) \cdot \frac{1-l^\prime}{P_A^0(\neg E \midd  \neg H)} $
\\~\\
Subsequent conditioning with respect to $E$ is given by:
\begin{equation}\widehat{Q_{l,l^\prime}} = Q_{l.l^\prime}^{\,0}(\bullet \midd E) \end{equation}
For double likelihood Adams conditioning we will prove a result similar to Theorem~\ref{AdamsBayes} but with fewer conditions, 
though at the cost of
a more involved statement of the equations in the Theorem.
\begin{theorem}\label{SimAdamsBayes} Given the assumptions and definitions mentioned above, 
and moreover assuming $l/l= l^\prime/l^\prime = P_A(H)/P_A(H) = P_A(E)/P_A(E) =1 $, the following equations and 
conditional equations are true for $l,l^\prime, r,P_A, Q_{l,l^\prime}$, 
and $\widehat{Q_{l,l^\prime}}$:
	\begin{enumerate}
	\item $\frac{P_A(H \wedge E)}{P_A(H \wedge E)} = \frac{P_A(H \wedge \neg E)}{P_A(H \wedge \neg E)} =1 \to Q_{l,l^\prime}^0(E \midd H) = l $
	\item $\frac{P_A(\neg H \wedge E)}{P_A(\neg H \wedge E)} = \frac{P_A(\neg H \wedge \neg E)}{P_A(\neg H \wedge \neg E)}=1  \to Q_{l,l^\prime}^0(E \midd \neg H) = l^\prime$

	\item \label{independence} $\displaystyle \widehat{Q_{l,l^\prime}}(X) = 
		 \frac{r \cdot P_A^0(X \midd H \wedge E) \cdot  P_A(H)+
 	  P_A^0(X \midd \neg H \wedge E) \cdot P_A(\neg H)}{ r  \cdot   \frac{P_A(H \wedge E)}{P_A(H \wedge E)}  \cdot P_A(H) +  \frac{P_A(\neg H \wedge E)} {P_A(\neg H \wedge E)} \cdot P_A( \neg H) }$
	\item
	\label{localValue}  $\displaystyle \widehat{Q_{l,l^\prime}}(H) =\frac{r \cdot  \frac{P_A(H \wedge E)}{P_A(H \wedge E)} \cdot  P_A(H)}{  r  \cdot   \frac{P_A(H \wedge E)}{P_A(H \wedge E)}  \cdot P_A(H) +  \frac{P_A(\neg H \wedge E)} {P_A(\neg H \wedge E)}\cdot P_A( \neg H)}$
	\end{enumerate}
\end{theorem}
\noindent From these facts the following conclusions  can be drawn:
\begin{itemize}
\item Performing single likelihood Adams conditioning for $L^0(E \midd H) = l$ and for $L^0(E \midd \neg H) = l^\prime$ 
under the conditions of Theorem~\ref{AdamsBayes}
in either order is equivalent to double likelihood Adams conditioning. 
\item  
Double likelihood Adams conditioning\footnote{%
Alternative names: simultaneous Adams conditioning, or likelihood pair Adams conditioning.} for likelihoods $l$ and 
$l^\prime$ transforms $P_A$ in to a probability function $P^\prime_A (= \widehat{Q_{l,l^\prime}})$ with the following properties (under reasonable conditions): (i)
the prior odds of $A$ for $H$ are protected in the sense that $P^\prime_A(H) = P_A(H)$,
(ii) for $P^\prime_A$: $L^0(E,H) =l$, and (iii) for $P^\prime_A$: $L^0(E,\neg H) =l^\prime$.
\item Double likelihood Adams conditioning is just one of many possible transformations that achieves the above requirements. 
Whether or not this particular transformation has a preferred status depends on the circumstances. Whether or not $A$, 
upon obtaining from $B$ an update of a likelihood pair or of a likelihood ratio, ought to be able to incorporate that new information by means of a canonical 
probability transformation of their belief state, and if so which transformation can play that role, is left undecided at the current level of abstraction.

\item Because $l$ and $l^\prime$ do not occur in the expression for $\displaystyle \widehat{Q_{l,l^\prime}}(X)$ 
double likelihood Adams conditioning  is 
globally representation independent. This feature of double likelihood Adams conditioning is in contrast with single likelihood 
Adams conditioning which is not globally representation independent.
\item In view of Theorem~\ref{AdamsBayes} the following answer to the question mentioned in the first lines of the 
paper is obtained: only by working with a 
likelihood ratio and by processing both ratios simultaneously global representation independence is obtained. 
\end{itemize}

\begin{proof} As double likelihood Adams conditioning may be considered the more important conditioning
transformation, in comparison to single likelihood Adams conditioning, the proofs have been worked out in full detail without making use of 
calculations for the single likelihood case.
\begin{enumerate}
\item \begin{enumerate} 
	\item $\displaystyle Q_{l,l^\prime}(E \wedge H)$
	\begin{align*}
	& = (	P_A(H \wedge E \wedge \bullet) \cdot \frac{l}{P_A^0(E \midd H)}+
 	P_A(H \wedge \neg E  \wedge \bullet ) \cdot \frac{1-l}{P_A^0(\neg E \midd H)}+\\
 	&\quad \hspace{1mm} P_A(\neg H \wedge E \wedge \bullet) \cdot \frac{l^\prime}{P_A^0(E \midd  \neg H)}+
	P_A(\neg H \wedge \neg E  \wedge \bullet ) \cdot \frac{1-l^\prime}{P_A^0(\neg E \midd  \neg H)})(E \wedge H)\\
	&=	P_A(H \wedge E) \cdot \frac{l}{P_A^0(E \midd H)} = l \cdot  \frac{P_A(H \wedge E)}{P_A(H \wedge E)} \cdot P_A(H)
	\end{align*}
	\item $\displaystyle Q_{l,l^\prime}(H)$
	\begin{align*}
	& = (P_A(H \wedge E \wedge \bullet) \cdot \frac{l}{P_A^0(E \midd H)}+
 	P_A(H \wedge \neg E  \wedge \bullet ) \cdot \frac{1-l}{P_A^0(\neg E \midd H)}+\\ 
 	&\quad \hspace{1mm} P_A(\neg H \wedge E \wedge \bullet) \cdot \frac{l^\prime}{P_A^0(E \midd  \neg H)}+
	P_A(\neg H \wedge \neg E  \wedge \bullet ) \cdot \frac{1-l^\prime}{P_A^0(\neg E \midd  \neg H)})(H) \\
	&=	 P_A(H \wedge E ) \cdot \frac{l}{P_A^0(E \midd H)}+
 	P_A(H \wedge \neg E  ) \cdot \frac{1-l}{P_A^0(\neg E \midd H)} \\
	&= \frac{P_A(H \wedge E)}{P_A(H \wedge E)}  \cdot l \cdot P_A(H) + 
		\frac{P_A(H \wedge \neg E)}{P_A(H \wedge \neg E)}  \cdot(1-l) \cdot P_A(H) =\\
	&=(l \cdot  \frac{P_A(H \wedge  E)}{P_A(H \wedge  E)}  + (1-l) \cdot \frac{P_A(H \wedge \neg E)}{P_A(H \wedge \neg E)}  ) \cdot P_A(H)
	\end{align*}
	\item 
	\label{itemc} $\displaystyle Q_{l,l^\prime}^0(E \midd H) = \frac{Q_{l,l^\prime}(E \wedge H)}{Q_{l,l^\prime}(H)}=
	\frac{l \cdot \frac{P_A(H \wedge E)}{P_A(H \wedge E)} }{l \cdot  \frac{P_A(H \wedge  E)}{P_A(H \wedge  E)}  + (1-l) \cdot \frac{P_A(H \wedge \neg E)}{P_A(H \wedge \neg E)}  }$
	\end{enumerate}
	
\item \begin{enumerate} 
	\item $\displaystyle Q_{l,l^\prime}(E \wedge \neg H)$
	\begin{align*}
	& = (	P_A(H \wedge E \wedge \bullet) \cdot \frac{l}{P_A^0(E \midd H)}+
 	P_A(H \wedge \neg E  \wedge \bullet ) \cdot \frac{1-l}{P_A^0(\neg E \midd H)}+\\
 	&\quad \hspace{1mm} P_A(\neg H \wedge E \wedge \bullet) \cdot \frac{l^\prime}{P_A^0(E \midd  \neg H)}+
	P_A(\neg H \wedge \neg E  \wedge \bullet ) \cdot \frac{1-l^\prime}{P_A^0(\neg E \midd  \neg H)})(E \wedge  \neg H)\\
	&=	P_A(\neg H \wedge E) \cdot \frac{l^\prime}{P_A^0(E \midd \neg H)} = l^\prime  \cdot \frac{P_A(E \wedge \neg H)}{P_A(E \wedge \neg H)} \cdot P_A(\neg H)
	\end{align*}
	\item $\displaystyle Q_{l,l^\prime}(\neg H)$
	\begin{align*}
	& = (P_A(H \wedge E \wedge \bullet) \cdot \frac{l}{P_A^0(E \midd H)}+
 	P_A(H \wedge \neg E  \wedge \bullet ) \cdot \frac{1-l}{P_A^0(\neg E \midd H)}+\\ 
 	&\quad \hspace{1mm} P_A(\neg H \wedge E \wedge \bullet) \cdot \frac{l^\prime}{P_A^0(E \midd  \neg H)}+
	P_A(\neg H \wedge \neg E  \wedge \bullet ) \cdot \frac{1-l^\prime}{P_A^0(\neg E \midd  \neg H)})(\neg H) \\
	&=	 P_A(\neg H \wedge E ) \cdot \frac{l^\prime}{P_A^0(E \midd \neg H)}+
 	P_A(\neg H \wedge \neg E  ) \cdot \frac{1-l^\prime}{P_A^0(\neg E \midd \neg H)} \\
	&= l^\prime \cdot \frac{P_A(\neg H \wedge E)}{P_A(\neg H \wedge E)} \cdot P_A(\neg H) + (1-l^\prime ) \cdot \frac{P_A(\neg H \wedge  \neg E)}{P_A(\neg H \wedge \neg E)} \cdot  P_A(\neg H) 
	\end{align*}
	Now the argument for part $2$ of the Theorem follows in the same way as the argument for part 1 in item~\ref{itemc} above.
	\item $\displaystyle Q_{l,l^\prime}^0(E \midd \neg H) = \frac{Q_{l,l^\prime}(E \wedge \neg H)}{Q_{l,l^\prime}(\neg H)}=
	\frac{l^\prime \cdot P_A( \neg H)}{P_A( \neg H)} = l^\prime$
	\end{enumerate}
	

\item \begin{enumerate}
	\item $\displaystyle Q_{l,l^\prime}(E) $
	\begin{align*}
	&= (P_A(H \wedge E \wedge \bullet) \cdot \frac{l}{P_A^0(E \midd H)}+
 	P_A(H \wedge \neg E  \wedge \bullet ) \cdot \frac{1-l}{P_A^0(\neg E \midd H)}+\\ 
 	&\quad \hspace{1mm} P_A(\neg H \wedge E \wedge \bullet) \cdot \frac{l^\prime}{P_A^0(E \midd  \neg H)}+
	P_A(\neg H \wedge \neg E  \wedge \bullet ) \cdot \frac{1-l^\prime}{P_A^0(\neg E \midd  \neg H)})(E)\\
	&=	P_A(H \wedge E) \cdot \frac{l}{P_A^0(E \midd H)}+
 	P_A(\neg H \wedge E) \cdot \frac{l^\prime}{P_A^0(E \midd  \neg H)}\\
	&= l \cdot \frac{P_A(H \wedge E)}{P_A(H \wedge E)}  \cdot P_A(H) + l^\prime \cdot \frac{P_A(\neg H \wedge E)}{P_A(\neg H \wedge E)}  \cdot P_A(\neg H)
	\end{align*}
	\item $\displaystyle Q_{l,l^\prime}(X\wedge E) $
	\begin{align*}
	&= (P_A(H \wedge E \wedge \bullet) \cdot \frac{l}{P_A^0(E \midd H)}+
 	P_A(H \wedge \neg E  \wedge \bullet ) \cdot \frac{1-l}{P_A^0(\neg E \midd H)}+\\ 
 	&\quad \hspace{1mm} P_A(\neg H \wedge E \wedge \bullet) \cdot \frac{l^\prime}{P_A^0(E \midd  \neg H)}+
	P_A(\neg H \wedge \neg E  \wedge \bullet ) \cdot \frac{1-l^\prime}{P_A^0(\neg E \midd  \neg H)})(X \wedge E)\\
	&=	P_A(X \wedge H \wedge E) \cdot \frac{l}{P_A^0(E \midd H)}+
 	P_A(X \wedge \neg H \wedge E) \cdot \frac{l^\prime}{P_A^0( E \midd  \neg H)}\\
	&=	P_A(X \wedge H \wedge E) \cdot \frac{l \cdot P_A(H)}{P_A(E \wedge H)}+
 	P_A(X \wedge \neg H \wedge E) \cdot \frac{l^\prime \cdot P_A(\neg H)}{P_A( E \wedge  \neg H)}\\
	&=	l \cdot P_A^0(X \midd H \wedge E) \cdot  P_A(H)+
 	 l^\prime \cdot P_A^0(X \midd \neg H \wedge E) \cdot P_A(\neg H)
	 \end{align*}
	\item $\displaystyle \widehat{Q_{l,l^\prime}}(X)\hspace{2,2mm}=\hspace {1mm}\frac{Q_{l,l^\prime}(X \wedge E)}{Q_{l,l^\prime}(E)}$
	\begin{align*}
	&= \frac{l \cdot P_A^0(X \midd H \wedge E) \cdot  P_A(H)+
 	 l^\prime \cdot P_A^0(X \midd \neg H \wedge E) \cdot P_A(\neg H)}{ l \cdot \frac{P_A(H \wedge E)}{P_A(H \wedge E)}  \cdot P_A(H) + l^\prime \cdot \frac{P_A(\neg H \wedge E)} {P_A(\neg H \wedge E)}  \cdot P_A( \neg H)}\\
	 &=  \frac{l^\prime \cdot (l /l^\prime \cdot P_A^0(X \midd H \wedge E) \cdot  P_A(H)+
 	  P_A^0(X \midd \neg H \wedge E) \cdot P_A(\neg H))}{l^\prime ( l/l^\prime  \cdot   \frac{P_A(H \wedge E)}{P_A(H \wedge E)}  \cdot P_A(H) +  \frac{P_A(\neg H \wedge E)} {P_A(\neg H \wedge E)}  \cdot P_A( \neg H) ) }\\
	  &=	  \frac{r \cdot P_A^0(X \midd H \wedge E) \cdot  P_A(H)+
 	  P_A^0(X \midd \neg H \wedge E) \cdot P_A(\neg H)}{ r  \cdot   \frac{P_A(H \wedge E)}{P_A(H \wedge E)}  \cdot P_A(H) +  \frac{P_A(\neg H \wedge E)} {P_A(\neg H \wedge E)} \cdot P_A( \neg H) }
	 \end{align*}
	\end{enumerate}
\item $\displaystyle \widehat{Q_{l,l^\prime}}(H) = \frac{r \cdot P_A^0( H \midd H \wedge E) \cdot  P_A(H)+
 	  P_A^0(H \midd \neg H \wedge E) \cdot P_A(\neg H)}{  r  \cdot   \frac{P_A(H \wedge E)}{P_A(H \wedge E)}  \cdot P_A(H) +  \frac{P_A(\neg H \wedge E)} {P_A(\neg H \wedge E)}\cdot P_A( \neg H)}$\\
	  \begin{align*}
	  &=  \frac{r \cdot  \frac{P_A(H \wedge E)}{P_A(H \wedge E)} \cdot  P_A(H)}{  r  \cdot   \frac{P_A(H \wedge E)}{P_A(H \wedge E)}  \cdot P_A(H) +  \frac{P_A(\neg H \wedge E)} {P_A(\neg H \wedge E)}\cdot P_A( \neg H)}
	  \end{align*}
\end{enumerate}
\end{proof}

\section{Proposition kinetics for LRTMR} 
In this Section it is 
assumed that initially $E$ is not yet included in the proposition space $S_A$ of $A$.  This assumption deviate from the assumptions
underlying single likelihood  Adams conditioning as well as double likelihood Adams conditioning.

 The following scenario indicates why it may be reasonable to assume that the evidence proposition is initially not known to $A$ 
 and for that reason not included in its proposition space $S_A$.
\begin{enumerate}
\item  $A$ may ask $B$ to provide evidence of relevance concerning proposition 
$H$ without having a particular and technically specific form of such evidence in mind; for instance $A$ may suggest 
$B$ to consider ``something with DNA'' instead of a more precise indication of what sort of technology is to be used.
\item A mere name $E$ for a proposition yet to be designed is agreed upon between $A$ and $B$ intended to be used
 for expressing what $B$ proposes that can be said about evidence for $H$ that is available to $B$.
\item  $B$ may subsequently propose to $A$ to make use of (to instantiate the template with) 
an evidence proposition ``of the type $T$'',  and $A$ may agree upon this plan with $B$. In other words it is greed that $E$ will have  type $T$.
\item $B$ communicates an abstract form $E^a$ of $E_{B}$ to $A$, which will serve as the public version of $A$'s evidence proposition, 
meant for the interactive reasoning in cooperation with $A$.
\item $A$ and $B$ agree to use $E$ as the name for $E^a$.
\item $A$ confirms that $E$ is sufficiently new given its background information. This means 
that the current $P_A$ has not come about by conditioning or constraining (the most prominent 
alternative revision mechanism in the imprecise case) on any proposition comparable to $E$. 
If no confirmation along these lines can be obtained by $A$  this thread of interaction with $B$ is aborted.
\item $A$ makes the plan to incorporate $E$ in its proposition space (proposition space family) using proposition kinetics:
\begin{itemize}
	\item the plan is to be carried out once relevant probabilistic information is made available to $A$ by $B$;
	\item $A$ need not become fully aware of the meaning of $E$ (i.e.\ of $E_B$) at any stage during the reasoning process;
	\item  $A$ must be able to embed enough information regarding 
	$E^a$ in its background knowledge base $K_B$ that in a forthcoming situation it may recognize a (high) 
	degree of similarity with the contents of a new proposition say $E^\prime$ which may be 
	proposed to $A$ by the same or another forensic expert (in the same, or another (?) case) 
	with the effect that $A$ must refuse subsequent Bayesian conditioning on $E^\prime$.\footnote{%
	This refusal is essential only after a reasoning step involving conditioning on $E$ has been performed.}
	\end{itemize}
\item  Therefore at this stage 
 $A$ is ready to receive information related to the original beliefs in $E$ (i.e. $E_B$) and $H$. 
 Preferably this is done by way of $B$ first sending to $A$ a likelihood ratio $r = \LR^0_B(E,H,\neg H)$.
$A$ will use this information to expand its proposition space with $E$, and (ii) to extend its (precise) belief function.

 Here it is assumed that  $B$ will report to $A$ about $E$  what it actually thinks (believes about) of $E_B$. 
 In some circumstances $B$ 
reports its past beliefs rather than current beliefs. Indeed $B$ may already have established that $P_B(E) = 1$ before
communicating a likelihood ratio $r = \LR^0_B(E,H,\neg H)$ to $A$. But once $P_B(E) = 1$,  unavoidably (for $B$) 
$LR_B(E,H,\neg H)=1$ just as well, a value not worth of being communicated.\footnote{%
This is a difficult point as $B$ is unlikely to agree with $A$ on the use of $E_B$, to begin with, if $B$ is subsequently 
unable to say anything relevant (though unknown to $A$) about its findings on $P_B(E)$. 
The fact that $B$ engages 
in the protocol at all 
provides statistically relevant information to $A$ which $A$ can, but should not, use for a subjective update of its priors.}

\item $A$ and $B$ may now proceed with the protocol as it has been
 specified above without proposition kinetics. 
\end{enumerate}

\subsection{Proposition kinetics in a  four element proposition space: global representation independence}
\label{PSkinOneGen}
It will be assumed that initially $H$ is in the proposition space of $A$ while $E$ is not. 
The simplest nontrivial proposition space $S_A$ has a single generator $H$ which is 
neither $\top$ nor $\bot$ so that $\top, \bot, H$, and $\neg H$ are the four elements of $S_A= S_A(H)$.
Initially it is assumed that $0< P_A(H) = p <1$.

It is then assumed that  $A$ receives from $B$ the information that $LR^0_B(E,H,\neg H) = r$  with $r > 0$.
Extending its proposition space $S_A$ with $E$
leads to a proposition space with two generators $H$ and $E$, and  
16 elements: 
\[|S_A(H,E)| = \{\top, \bot, H,\neg H, E, \neg E, H \wedge E, H \wedge \neg E, \neg H \wedge E, \neg H \wedge \neg E\}\] 
In order to specify a belief function $Q_A$ on this extende proposition space it satisfies to 
specify besides $Q_A(H)= p$ (inherited from $P_A$ the values $Q_A^0(E \midd H) = l$ and 
$Q_A^0(E \midd \neg H) = l^\prime$.

Now it is assumed that upon receiving the trusted information that $LR^0_B(E,H,\neg H) = r$, $A$ 
guesses values $l$ and $l^\prime$ for the underlying ratios (undisclosed by $B$) 
such that $r = \frac{l}{l^\prime}$. $A$ applies proposition kinetics by simultaneously 
extending $S_A$ to $S_A(H,E)$ and by specifying $Q_A$ so that $Q_A(H) = p$, $Q_A^0(E \midd H) = l$
and $Q_A^0(E \midd \neg H) = l^\prime$ (for the chosen values $l$ and $l^\prime$).

The next phase in LRTMR is that $A$ receives the information that $P_B(E) = 1$ 
(the evidence proposition is found to hold true by trusted agent $B$)  and $A$ processes this information 
by applying Bayesian conditioning to the evidence proposition $E$, 
thereby obtaining its posterior belief function $\widehat{Q_A} = Q_A^0(\bullet \midd E)$. 
The only probability worth evaluating is $\widehat{Q_A}(H)$:

\noindent $\displaystyle \widehat{Q_A}(H) = Q_A^0(\bullet \midd E)(H) = Q_A^0(H \midd E) =\frac{Q_A^0(E \midd H)\cdot Q_A(H)}{Q_A(E)}= $
\begin{align*}
&=\frac{Q_A^0(E \midd H)\cdot Q_A(H)}{Q_A^0(E \midd H) \cdot Q_A(H) +
 Q_A^0(E \midd \neg H) \cdot Q_A(\neg H)}\\
 &=\frac{l\cdot P_A(H)}{l \cdot P_A(H) + l^\prime \cdot P_A(\neg H)}
 = \frac{l/l^\prime\cdot P_A(H)}{l/l^\prime \cdot P_A(H) + P_A(\neg H)} \\
 &= \frac{r\cdot P_A(H)}{r \cdot P_A(H) + 1-P_A(H)} 
 = \frac{r\cdot P_A(H)}{1 + (r -1)\cdot P_A(H)}
 \end{align*}

Evaluating $\widehat{Q_A}(H)$ produces precisely the value as required in the outline description of 
LRTMR in Paragraph~\ref{outline} above. This fact can be understood as local representation independence
of LRTMR in the case of proposition kinetics with a prior proposition space generated by the hypothesis proposition.
Under the constraint of a single proposition generated proposition space  global representation independence is guaranteed, 
because it coincides with local representation independence.

\subsection{Proposition kinetics on an arbitrary proposition space: local representation independence}
\label{PSkinTwoGen}
The situation may be reconsidered in the case of a proposition space $S_A$ which is generated by two 
propositions $H$ and $L$, $L$ now playing the role of the second hypothesis proposition. 
Upon receiving from $B$ the information that $LR^0_B(E,H,\neg H) = r$  with $r > 0$, $A$ 
extends its proposition space from $S_A=S_A(H,L)$ to $S_A(H,L,E)$.
In order to have a belief function $Q_A$ on this space extending the prior belief function $P_A$ 
on the prior proposition space $S_A$, $A$ must choose the following likelihoods: 
$Q_A^0(E \midd   H \wedge L) = u$, $Q_A^0(E \midd  H \wedge  \neg L) = v$,
$Q_A^0(E \midd  \neg H \wedge L) = u^\prime$, and $Q_A^0(E \midd  \neg H \wedge  \neg L) = v^\prime$.
These values must be chosen in such a manner that $\LR_A(E,H,\neg H) = r$ will hold. Therefore it is required that 
(with respect to $Q_A$):

$\displaystyle r = \LR^0_A(E,H,\neg H)= \frac{L^0_A(E,H)}{L^0_A(E,\neg H)}=\frac{Q_A^0(E \midd H)}{Q_A^0(E \midd  \neg  H)}$
\begin{align*}
&=\frac{Q_A(E \wedge H)}{Q_A(H)} \cdot \frac{Q_A(\neg H)}{Q_A(E \wedge \neg H)}\\
&= \frac{Q_A(E \wedge H \wedge L )+ Q_A(E \wedge H \wedge \neg L )}{Q_A(H)} \cdot 
	\frac{Q_A(\neg H)}{Q_A(E \wedge \neg H \wedge L)+ Q_A(E \wedge \neg H \wedge \neg L)}\\
	&=\frac{u \cdot Q_A( H \wedge L )+ v \cdot Q_A( H \wedge \neg L )}{Q_A(H)} \cdot 
	\frac{ Q_A(\neg H)}{u^\prime \cdot Q_A(\neg H \wedge L)+ v^\prime \cdot Q_A(\neg H \wedge \neg L)}
\end{align*}

 The following probabilities can be calculated:\\
$Q_A(E)= u \cdot Q_A(H \wedge L) + v \cdot Q_A(H \wedge \neg L) + u^\prime \cdot Q_A(\neg H \wedge L) +
v^\prime \cdot Q_A(\neg H \wedge \neg L)$ and\\
$Q_A(E \wedge H) = u \cdot Q_A(H \wedge L) + v \cdot Q_A(H \wedge \neg L) $.

Upon receiving the information that (according to $B$) $P_B(E) = 1$, $A$ will perform 
Bayesian conditioning resulting in the posterior belief function $\widehat{Q_A} = Q_A^0(\bullet \midd E)$. 
Calculating $\widehat{Q_A}(H)$ produces:

\noindent $\displaystyle \widehat{Q_A}(H) = Q_A^0(\bullet \midd E)(H) = Q_A^0(H \midd E) = 
	\frac{Q_A(H \wedge E)}{Q_A(E)}$
	\begin{align*}
	&= \frac{u \cdot Q_A(H \wedge L) + 
		v \cdot Q_A(H \wedge \neg L)}{u \cdot Q_A(H \wedge L) + v \cdot Q_A(H \wedge \neg L) + 
		u^\prime \cdot Q_A(\neg H \wedge L) +
		v^\prime \cdot Q_A(\neg H \wedge \neg L)}\\
	&=
		\frac{u \cdot Q_A(H \wedge L) + v \cdot Q_A(H \wedge \neg L)}{u \cdot Q_A(H \wedge L) + 
			v \cdot Q_A(H \wedge \neg L) + 
			\frac{u \cdot Q_A( H \wedge L )+ v \cdot Q_A( H \wedge \neg L )}{r} \cdot 
			\frac{ Q_A(\neg H)}{Q_A(H)}}\\
	&= \frac{1}{1 + \frac{1}{r} \cdot \frac{ Q_A(\neg H)}{Q_A(H)}} 
	=\frac{r \cdot Q_A(H)}{1 + (r-1) \cdot Q_A(H) }\\
	&=	\frac{r \cdot P_A(H)}{1 + (r-1) \cdot P_A(H) }
	\end{align*}

It may be concluded that in the case of two generators for $S_A$ and an arbitrary 
guess for all new probabilities (upon introducing $E$ as a new generator) local
representation independence (that is independence with respect to  $H$) is obtained. Using a similar proof it can be shown that 
representation independence generalizes to an arbitrary number of generators for $S_A$. 

\subsection{Proposition kinetics for a 6 element proposition space: failure of global representation independence}
\label{NoglobalRD}
Global representation independence is a different matter as will be found by considering an example. Calculating $\widehat{Q_A}(L)$ produces:\\
\noindent $\displaystyle \widehat{Q_A}(L) = Q_A^0(\bullet \midd E)(L) = Q_A^0(L \midd E) = 
	\frac{Q_A(L \wedge E)}{Q_A(E)}= \\
	\frac{u \cdot Q_A(H \wedge L) + 
		u^\prime \cdot Q_A(\neg H \wedge L)}{u \cdot Q_A(H \wedge L) + v \cdot Q_A(H \wedge \neg L) + 
		u^\prime \cdot Q_A(\neg H \wedge L) +
		v^\prime \cdot Q_A(\neg H \wedge \neg L)}$

Now consider as an example the case that $Q_A( H \wedge L )= Q_A( H \wedge \neg L )=
	Q_A( \neg H \wedge L )=Q_A( \neg H \wedge \neg L ) = \frac{1}{4}$. Then the requirement on $u,v,u^\prime$ and $v^\prime$ simplifies to:
$\displaystyle r = \frac{u + v }{u^\prime + v^\prime}$, and $\widehat{Q_A}(L )$ simplifies as follows:  
$\displaystyle \widehat{Q_A}( L ) = \frac{u +u^\prime}{u + v + u^\prime + v^\prime}$. 
Now choosing $u^\prime = v^\prime = \frac{1}{3}$ and $r=\frac{3}{2}$ we find $u+ v=1$, for instance 
$u = \frac{3}{7}$ and $v= \frac{4}{7}$ or alternatively $u = \frac{4}{7}$ and $v= \frac{3}{7}$. In these two cases  
$\widehat{Q_A}( L )$ takes  different values. In the first case
$\displaystyle \widehat{Q_A}( L ) = \frac{u + u^\prime}{u + v + u^\prime + v^\prime}= 
\frac{3/7 + 1/3}{3/7 + 4/7 + 1/3 + 1/3}$ whereas in the second case: 
$\displaystyle \widehat{Q_A}( L ) =\frac{4/7 + 1/3}{3/7 + 4/7 + 1/3 + 1/3}$.
It follows that when $S_A$ has two or more generating propositions global representation independence fails.

\subsection{Bayesian conditioning followed by Jeffrey conditioning}
The process specified in Paragraph~\ref{PSkinTwoGen} has two disadvantages: failure of global representation independence, and 
pollution of the belief function with meaningless (guessed) values, due to the fact that the mere availability of a new
likelihood ratio leaves open many degrees of freedom. The second disadvantage, however, is immaterial because the 
first issue stands in the way of chaining the reasoning pattern with subsequent reasoning steps taking the obtained posterior belief function as a prior. 

The following process is plausible for $A$ upon it receiving the information that
trusted agent $B$'s beliefs imply $\LR^0_B(E,H,\neg H) = r$. First use the new information on 
$E$ in relation to $H$  to compute a new (revised) belief $\widehat{p}$ in $H$ according to the process as specified in Paragraph~\ref{PSkinOneGen}:
$$\widehat{p} = \frac{r \cdot P_A(H)}{1 + (r-1) \cdot P_A(H) }$$ This first step involves proposition kinetics. In
the second step, however, the proposition space of $A$ is not extended, instead merely a revision of the belief function is performed.

Recall that Jeffrey conditioning 
(with parameter $p$ on proposition $H$) 
works as follows
$$ \widehat{P_{p,H}} = p \cdot P^0(\bullet \midd H) + (1-p) \cdot P^0(\bullet \midd \neg H)$$
The revision of $P_A$ is found by the following application of Jeffrey conditioning:
$$ \widehat{P_A} = \widehat{P_{\widehat{p},H}} = 
\frac{r \cdot P_A(H)}{1 + (r-1) \cdot P_A(H) } \cdot P^0(\bullet \midd H) + (1-\frac{r \cdot P_A(H)}{1 + (r-1) \cdot P_A(H) }) \cdot P^0(\bullet \midd \neg H)$$
The two stage belief revision process just outlined produces a posterior belief state which may serve as a
prior belief state for a subsequent reasoning step.

\subsection{Adams followed by Bayes equals Jeffrey after Bayes}
\label{commute}
At first sight it seems that the case where $E$ is contained in the proposition space of $A$ is the more general case. 
Double likelihood Adams conditioning allows the independent processing, in terms of belief state revision  
by $A$, of an incoming likelihood ratio from B. If, however the subsequent phase of Bayes conditioning is included,
the path involving proposition space kinetics turns out to be the more general one. Below it will be 
proven that both revision processes commute. I will first establish the equivalence of both approaches by way of direct calculation. 
Subsequently a concise manner of formulating this and other equivalences with the help of conditioning 
combinators is provided.

Upon receiving a message $\LR^0_B(E,H,\neg H)= r$ A may first create a second proposition 
space generated by $H$ and $E$ and proceed as in~\ref{PSkinOneGen}, thereby producing a posterior probability 
$$\widehat{p} = \frac{r \cdot P_A(H)}{1 + (r-1) \cdot P_A(H) }$$ 
for $H$ after Bayes conditioning in the auxiliary proposition space. 

Now $S_A$ contains $E$ which allows Bayes conditioning on $E$, thus obtaining as an intermediate 
result $Q = P^0(\bullet \midd E)$. Subsequently Jeffrey conditioning with parameter $\widehat{p}$ and with
respect to $H$
may be applied to the intermediate probability function $Q$ thus obtaining $P_p$ as follows:
\[\widehat{P} = \widehat{p} \cdot Q^0(\bullet \midd H) +  (1-\widehat{p}) \cdot Q^0(\bullet \midd \neg H)\]
$\widehat{P}$  is a plausible result of performing the combination of receiving a likelihood ratio $r$ (for $E$ and $H$)
and a confirmation of $E$ from the prior belief $P_A$. We consider $\widehat{P}(x)$ for an arbitrary proposition $x$ in $S_A$:
\begin{align*}
\widehat{P}(x)&= (\widehat{p} \cdot Q^0(\bullet \midd H) +  (1-\widehat{p}) \cdot Q^0(\bullet \midd \neg H))(x) \\
&= (\widehat{p} \cdot \frac{Q(\bullet \wedge H)}{Q(H)} +  
(1-\widehat{p}) \cdot \frac{Q^0(\bullet \wedge \neg H)}{Q(\neg H)})(x)\\
&= (\widehat{p} \cdot \frac{P_A^0(\bullet \wedge H \midd E)}{P_A^0(H \midd E)} +  
(1-\widehat{p}) \cdot \frac{P^0_A(\bullet \wedge \neg H \midd E)}{P_A^0(\neg H \mid E)})(x)\\
&= (\widehat{p} \cdot \frac{P_A(\bullet \wedge H \wedge E) \cdot P_A(E)}{P_A(E) \cdot P_A(H \wedge E)} +  
(1-\widehat{p}) \cdot \frac{P_A(\bullet \wedge \neg H \wedge E) \cdot P_A(E)}{P_A(E) \cdot P_A(\neg H \wedge E)})(x)\\
&= (\widehat{p} \cdot \frac{P_A(\bullet \wedge H \wedge E)}{P_A(H \wedge E)} +  
(1-\widehat{p}) \cdot \frac{P_A(\bullet \wedge \neg H \wedge E) }{ P_A(\neg H \wedge E)})(x)\\
&= (\widehat{p} \cdot P_A^0(\bullet \midd H \wedge E) +  
(1-\widehat{p}) \cdot P_A^0(\bullet \midd \neg H \wedge E) )(x)\\
&= \widehat{p} \cdot P_A^0(x \midd H \wedge E) +  
(1-\widehat{p}) \cdot P_A^0(x \midd \neg H \wedge E)\\
&= \frac{r \cdot P_A(H) \cdot P_A^0(x \midd H \wedge E) }{1 + (r-1) \cdot P_A(H) }+ 
(1-\frac{r \cdot P_A(H)}{1 + (r-1) \cdot P_A(H) })\cdot P_A^0(x \midd \neg H \wedge E)\\
&= \frac{r \cdot P_A(H) \cdot P_A^0(x \midd H \wedge E) }{1 + (r-1) \cdot P_A(H) }+ 
(\frac{1- P_A(H)}{1 + (r-1) \cdot P_A(H) })\cdot P_A^0(x \midd \neg H \wedge E)\\
&= \frac{r \cdot P_A(H) \cdot P_A^0(x \midd H \wedge E) + P_A(\neg H) \cdot P_A^0(x \midd \neg H \wedge E)}{1 + (r-1) \cdot P_A(H) }
\end{align*}

It turns out that under the assumption that $P_A(E \wedge H) / P_A(E \wedge H)= 
P_A(E \wedge \neg H) / P_A(E \wedge \neg H)=P_A(\neg E \wedge H) / P_A(\neg E \wedge H)=
P_A(\neg E \wedge \neg H) / P_A(\neg E \wedge \neg H)=1$, $\widehat{P}(x)$ is identical to $\widehat{Q_{l,l^\prime}}(x)$ as found in 
Theorem~\ref{SimAdamsBayes}. This identity serves as a confirmation of the validity of each of 
the two  pathways which derive the same probability function on the same proposition space:
(i) double Adams conditioning followed by Bayes conditioning and,
(ii) the following sequence of steps:
\begin{enumerate}
\item starting a new proposition space with generator $H$, the probability being taken from $P_A$, 
\item proposition kinetics in the new proposition space adding $E$ to it such that the acquired likelihood ratio fits, 
\item Bayes conditioning with proposition kinetics in the auxiliary workspace, 
\item extracting the posterior probability $\widehat{p}$ of $H$ from the auxiliary proposition space, 
\item 
Bayes conditioning with proposition kinetics on the original (prior) proposition space of $A$, 
\item and finally Jeffrey conditioning (with respect to $\widehat{p}$ and $H$) on the result of the last step.
\end{enumerate}

\subsection{Proposition kinetics in advance of double likelihood Adams conditioning}
Starting with a belief state $(S_A,P_A)$ not containing proposition constant $E$, $A$ may first expand its proposition space 
using base rate inclusion $[bri;E;p]$ with $0<p<1$. Subsequently $A$ may apply a double likelihood Adams 
transformation with label $[dlac;E,H;l,l^\prime]$
($l$ and $ l^\prime$ both nonzero), and then $A$ may apply Bayesian conditioning without kinetics (label  $[bc;E]$).

It follows from the result in Paragraph~\ref{commute} that the resulting proposition space is independent from the choice of $p$. 
Thus proposition kinetics may be placed in advance of double likelihood Adams conditioning and subsequent Bayes conditioning.

\subsection{Limited merits of Adams conditioning}
\label{limitedM}
Double likelihood Adams conditioning with label $[dlac;E,H;l,l^\prime]$ with $l/l^\prime \neq 0$ transforms a belief state $(S_A,P_A)$ 
which involves a generator $E$ and a proposition $H$ into $(S_A,\widehat{P_A})$ in such a way that (i) the prior odds of $A$ regarding 
$H$  remain invariant i.e. $\widehat{P_A}(H) = P_A(H)$, and (ii) $LR^0_A(E,H, \neg H) =r$. In this update $E$ and $H$ are treated differently,
so that the subjectively given belief $H$ is protected, while $A$'s belief assertion $E$ which is supposed to be sensitive to scientifically grounded
observation is adapted: 
\[\frac{P_A(H \wedge E)}{P_A(H \wedge E)} =\frac{P_A(H \wedge \neg E)}{P_A(H \wedge \neg E)} = 1 \to \widehat{P_A}(E) = l.P_A(H) + l^\prime. (1-P_A(H))\] 
so that $\widehat{P_A}(E)$ differs from $P_A(E)$ for most 
choices of $l$ and $l^\prime$. The Adams transformation assigns different status to $A$'s priors regarding $H$ and $E$, regarding $A$'s prior for $E$ 
of lesser importance. Perhaps this aspect of Adams transformation disqualifies it from the perspective of subjective probability theory.

\section{Fallacies and single likelihood transfer based reasoning}
Instead of transferring a likelihood pair (simultaneously or in consecutive separate messages) or transferring a likelihood ratio, 
merely a single likelihood may be transferred by $B$ to $A$. Typically
in the literature on forensic reasoning the prosecution would expect MOE to provide such information to TOF. 

The resulting inferences
are often qualified as fallacies.\footnote{%
For a philosophical discussion of fallacies in the context of Bayesian reasoning I refer to Korb~\cite{Korb2004}} 
Transposing the conditional and the prosecutor's fallacy may be considered 
failed examples of attempts to design and use methods for single likelihood transfer mediated reasoning (SLTMR).
I will first focus on the so-called transposition of the conditional, a phrase attributed to Lindley by 
Fienberg \& Finkelstein~\cite{FienbergF1996}.

\subsection{What is wrong with transposition of the conditional?}
\label{toc}
Transposing the conditional is often portrayed as making the mistake that $P^0(H \midd E) = P^0(E \midd H)$ which
is admittedly easily refuted unless one of three ``unlikely'' conditions holds: $P(E) =0$, or $P(H)=0$, or $P(H) = P(E)$.

I consider this way of looking at what is wrong with transposition of the conditional (TOC) rather implausible.\footnote{%
In Paragraph~\ref{TCC} the implausibility of TOC as an inference mechanism in the absence of belief revision has 
already been argued in some detail.} 
In my view a more plausible way of looking at it takes into account the setting of new information and 
corresponding belief state revision. Instead of transposition of the conditional I will speak of transposition of the likelihood, 
which of course amounts to the same.
Now assuming that agent $A$ receives new information concerning the single likelihood $P_A^0(E \midd H)$, say 
$P_A^0(E \midd H)=l$. Is there a justification for $A$ to infer that after performing an 
appropriate belief revision resulting in $\widehat{P_A}$, the following identity may be correct: 
\[\widehat{P_A}^0(H \midd E) = l  (= P_A^0(E \midd H))\]
A plausible intuition runs as follows: if $P_A^0(E \midd H)$ is updated then $P_A^0(H \midd E)$ must be updated as well, perhaps 
not as much but in any case to some extent. And then it is suggested that the same modification applies, which is the fallacy at hand.

However, as will be shown below: the intuition is wrong: no update of a transposed likelihood can be inferred from the change of 
its transposed likelihood. Moreover there is nothing wrong in principle with receiving an update of a single likelihood and updating one's
subjective probabilities accordingly: that is the subject of Adams conditioning.
Following Theorem~\ref{AdamsBayes} it may be assumed  that $\widehat{P_A}$ is obtained from $P_A$ via single likelihood 
Adams conditioning. 
\[\widehat{P_A} = P_A(H \wedge E \wedge \bullet) \cdot \frac{l}{P_A^0(E \midd H)}+
P_A(H \wedge \neg E  \wedge \bullet ) \cdot \frac{1-l}{P_A^0(\neg E \midd H)}+P_A( \neg H \wedge \bullet)\]
The calculation of  $\widehat{P_A}^0(H \midd E) $ is covered in the following Theorem. The same conditions as for Theorem~\ref{AdamsBayes} are assumed.

\begin{theorem}
\label{TOCwrong} Under the assumption that single likelihood Adams conditioning is an 
appropriate belief transformation for $A$ in response to the reception of 
an update of a single likelihood the following holds.  

If $\widehat{P_A}$ is the posterior belief of $A$ upon acquiring knowledge that 
$P_A^0(E \midd H)=l$ then $\widehat{P_A}^0(H \midd E) = P_A^0(H \midd E).$
\end{theorem}
\begin{proof}
\noindent $\displaystyle \widehat{P_A}^0(H \midd E)= Q_l^0(H \midd E) = \frac{Q_l(H \wedge E)}{Q_l(E)}$
\begin{align*}
& = \frac{(P_A(H \wedge E  \wedge \bullet ) \cdot \frac{l}{P_A^0( E \midd H)})(H \wedge E)}
	{(P_A(H \wedge E \wedge \bullet) \cdot \frac{l}{P_A^0(E \midd H)}+P_A( \neg H \wedge \bullet))(E)}= \frac{P_A(H \wedge E  ) \cdot \frac{l}{P_A^0( E \midd H)}}
	{P_A(H \wedge E) \cdot \frac{l}{P_A^0(E \midd H)}+P_A( \neg H \wedge E)}=\\
& = \frac{l \cdot P_A(H)}{l \cdot P_A(H) + P_A( \neg H \wedge E)}= \frac{P_A^0(E \midd H) \cdot P_A(H)}{P_A^0(E \midd H) \cdot P_A(H) + P_A( \neg H \wedge E)}=\\
&= \frac{\frac{P_A(E \wedge H)}{P_A(H)} \cdot P_A(H)}{\frac{P_A(E \wedge H)}{P_A(H)} \cdot P_A(H) + P_A( \neg H \wedge E)}= \frac{P_A(E \wedge H)}{P_A(E \wedge H)+ P_A( \neg H \wedge E)} = \frac{P_A(E \wedge H)}{P_A(E )}= P_A^0(H \midd E)
\end{align*}
\end{proof}

\subsection{Prosecutor's conditioning: a justifiable residue of the prosecutor's fallacy}
It is sometimes claimed that reliable reasoning in forensics necessarily requires the 
balancing of at least two scenario's. 
Likelihood ratio transfer represents the communication of an evaluation some form of comparison between two scenario's. 
Transferring a single likelihood 
ratio from MOE to TOF might be considered as a communication concerning merely a single scenario, 
which may be considered problematic for that reason. 
I will show that there is no such problem, at least not in principle. 

The grounds for rejecting one sided reporting of evidence reside in the 
fact that MOE is not supposed to know TOF's prior beliefs. 
For the prosecutor, however, who I will refer to as POC for ``pioneer of claims'', it is acceptable to ask TOF about its prior 
beliefs and
to seek common ground with TOF on that matter in advance of formulating a claim in the form of a strong belief in a 
hypothesis $H$, for instance asserting that a certain course of events took place in a certain manner.
 After having established common ground with TOF concerning shared beliefs joint reasoning may proceed. Here is an example.
 Because there are shared beliefs no subscripts are introduced. If a subscript (holder) to these subjective probabilities must be assigned then 
 that will be TOF.
 
For propositional atoms $D_i$ for $ i \in \underline{n} = \{1,\dots,n\}$ it is assumed that $D_1 \vee \ldots \vee D_n = \top$ and
$D_i \wedge D_j= \bot $ for different $i,j \in \underline{n}$. $D_i$ expresses that focus is on individual $i$. 
Besides the $D_i$'s the proposition space of TOF has generators $E$ and $H$. $E$ satisfies $E = D_1 \vee \ldots \vee D_k$ for some  given $k < n$. 
$H \wedge D_i$ expresses that individual $i$ is considered the unique person (and suspect) who carried out a certain action.
Initially not much is known about $H$, and the overall probability $P(H)$ is rather low, and moreover $H$  is 
unevenly distributed over $k$ individuals, each of  whom attracts some attention as a potential suspect. 
In the example $H$ has a single peak for individual $1$, 
which models an uneven distribution of $H$. 

 In more detail the prior state is that  TOF and POC agree upon the following (shared) 
 beliefs which they both maintain,  including the relative height of $p$ which is motivated by 
 circumstantial evidence indicating individual $1$ as a suspect.
\begin{align*}
P(D_i) & = \frac{1}{n}, &1\leq i \leq n\\
P(H \wedge D_1) &= \frac{p}{n}, & \frac{1}{n}< p \leq 1\\
P(H \wedge D_i) & = \frac{1-p}{n-1}\cdot \frac{1}{n}, &1< i \leq n\\
P(H)&= \frac{1}{n}&\\
P(E)&= \frac{k}{n}\\
P(E \wedge D_i)&= \frac{1}{n}&1\leq i \leq k\\
P(E \wedge D_i)&= 0&k\leq i \leq n\\
P(E \wedge H)&=\sum_{i \in \underline{n}}\frac{P(E \wedge H \wedge D_i)}{P(D_i)} \cdot P(D_i)=\sum_{i \in \underline{n}}P(E \wedge H \wedge D_i)=\\
&=\sum_{i \in \underline{k}}P(E \wedge H \wedge D_i)=P(E \wedge H \wedge D_1) + \sum_{i =2}^{k}P(E \wedge H \wedge D_i)\\
&= \frac{p}{n}+(k-1)\cdot \frac{1-p}{n-1}\cdot \frac{1}{n}=\frac{1}{n}(p+(1-p)\cdot \frac{k-1}{n-1})
\end{align*}

POC calls MOE for advice and is informed by MOE way of a single likelihood transfer that $P^0_{\mathrm{MOE}}(E \midd H) =1$.
In other words, MOE advises POC to restrict the search for the perpetrator to the members of a group characterised by $E$.
POC agrees and requires of TOF  that they also restrict suspicions to individuals that satisfy $E$.

Using prosecutor's fallacy (see Thompson \& Shumann~\cite{ThompsonS1987}) as a reasoning pattern, 
POC and TOF may now infer $\widehat{P}(H \wedge D_1) \approx 1$. For instance with $p = 1/10, k = 100, n = 100.000$ the following
seems to hold:
$$\widehat{P}(H \wedge D_1) = \frac{P(H \wedge D_1)}{ P(H \wedge E)} =    
\frac{p}{n} \cdot \frac{n}{p+(1-p)\cdot \frac{k-1}{n-1}} =\frac{p}{n} 
	\cdot \frac{100.000}{\frac{1}{10}+(1-\frac{1}{10})\cdot \frac{100-1}{100.000-1}} \geq $$
	$$ = \frac{p}{n}  \cdot 500.000$$ thereby deriving a very high posterior probability that the suspect is the perpetrator. 
	
Alternatively (and without any justification either see Paragraph~\ref{toc}) TOF may upon receiving and adopting the information that 
$P^0(E \midd H) =1$ transpose the
conditional, thus obtaining $\widehat{P^0}(H \midd E) =1$ which together with $\widehat{P^0}(E \midd D_1) =1$ leads to 
$\widehat{P^0}(H \midd D_1) =1$ thereby increasing $\widehat{P_0}(H \wedge D_1)$ to $P_0(D_1)$, which in our example amounts to an increase of a factor 10.

Instead of an application of unjustified reasoning  it is possible to use Adams conditioning
in order to capture the belief revision which TOF and POC may justifiably adopt upon learning that $P^0(E \midd H) =1$.

Application of single likelihood Adams conditioning works as follows in this case:
\begin{align*} 
\widehat{P} &=  P(H \wedge E \wedge \bullet) \cdot \frac{l}{P^0(E \midd H)}+
P(H \wedge \neg E  \wedge \bullet ) \cdot \frac{1-l}{P^0(\neg E \midd H)}+P( \neg H \wedge \bullet) \\
&=P(H \wedge E \wedge \bullet) \cdot \frac{1}{P^0(E \midd H)}+P( \neg H \wedge \bullet)\\ 
\widehat{P} (H \wedge D_1) &=(P(H \wedge E \wedge \bullet) \cdot \frac{1}{P^0(E \midd H)}+P( \neg H \wedge \bullet)(H \wedge D_1)\\
&=P(H \wedge E \wedge D_1) \cdot \frac{1}{P^0(E \midd H)}+P( \neg H \wedge H \wedge D_1)\\
&=P(H  \wedge D_1) \cdot \frac{P(H)}{P(E \wedge H)}\\
&= \frac{p}{n} \cdot \frac{1}{n}\cdot \frac{1}{\frac{1}{n}\cdot (p+(1-p)\cdot \frac{k-1}{n-1})}\\
&= \frac{1}{n} \cdot \frac{p}{p+(1-p)\cdot \frac{k-1}{n-1}}\\
&=P(H \wedge D_1) \cdot \frac{1}{p+(1-p)\cdot \frac{k-1}{n-1}}
\end{align*}
The practical value of this conditioning step appears only when looking at an example. 
With $p = 1/10, k = 100, n = 100.000$ one finds:
\begin{align*} 
 \widehat{P} (H \wedge D_1)&=P(H \wedge D_1) \cdot \frac{1}{1/10+9/10\cdot \frac{99}{99.999}} \geq 
 P(H \wedge D_1) \cdot \frac{1}{1/10+9/10\cdot \frac{100}{100.000}}\geq \\
&  P(H \wedge D_1) \cdot \frac{1}{1/10+9/100}=P(H \wedge D_1) \cdot \frac{100}{19}
\end{align*}
Thus on the basis of a single likelihood obtained from MOE both TOF and POC have significantly increased the belief that
the suspect (person 1) has been the perpetrator: $\displaystyle P^0(H \midd D_1) \approx P^0(H \midd D_1) \cdot 100/19$. 
The use of approximation serves an expository purpose only while Adams conditioning provides precise values for all posterior probabilities.
This form of single likelihood Adams conditioning may be referred to as 
prosecutor conditioning (conditioning on a condition proposed by the prosecutor).

\section{Sender-side aspects of LRTMR}
In the previous chapters $A$ represents the role of TOF who is receiving information concerning likelihoods, likelihood ratio's,
and observed evidence from $B$ who represents MOE. In the setting of LRTMR, $B$ operates as a sender of information.
For $B$  a repertoire of messages to $A$ can be distinguished. 
\begin{enumerate}
\item $<L^0_B(E,H) = l >$ (for a closed rational number expression $l$ with $0< l \leq1$) is the message that the likelihood 
of evidence proposition $E$ with respect to hypothesis proposition $H$ is equal to $l$,
\item $<L^0_B(E,\neg H) = l >$ (for a closed rational number expression $l$ with $0< l \leq1$) is the 
message that the likelihood of evidence proposition $E$ with respect to negated hypothesis proposition $H$ is equal to $l$,
\item $<(L^0_B(E,H) = l \,\&\, L^0_B(E,\neg H) = l^\prime )>$ (for closed rational number expressions $l,l^\prime$ with 
$0< l,l^\prime \leq1$) is the message that the likelihood pair
of evidence proposition $E$ with respect to hypothesis proposition $H$ is equal to $(l,l^\prime)$,
\item $<LR^0_B(E,H,\neg H) = r >$ (for a closed rational number expression $r$ with $0< r \leq1$) is 
the message that the likelihood ratio of evidence proposition $E$ with respect to hypothesis proposition $H$ is equal to $r$,

\item $<P_B(E) = 1>$ is the message that $B$ considers evidence proposition $E$ to be true.
\item $<LR^0_B(E,H,\neg H) = r\,  \&\, P_B(E) = 1>$ is the combined (simultaneous) message that 
includes both $<LR^0_B(E,H,\neg H) = r >$ and $<P_B(E) = 1>$.
\end{enumerate}
For a message $<m>$ actions of the form $snd_{B\to A}(<m>)$ may be performed by agent $B$ resulting in the asynchronous\footnote{%
An asynchronous message may arrive later than it was sent. A synchronous message arrives at the same time.
The price paid for synchrony is that sending a message may be delayed until the intended recipient is  able to receive the message. 
Thread algebra of~\cite{BergstraM2007} can be used to specify the deterministic concurrent cooperation of TOF and MOE with either 
synchrounous or asynchronous message passing.} and 
eventually successful transfer of the message $<m>$ to agent $A$. $B$ may also perform internal actions, 
notably finding out likelihoods in advance of transferral.  $\mathit{find}_B (<L^0_B(E,H) = l>)$ represents the action of $B$ 
coming to the belief that $P^0(E \midd H)$ should be given value $l$. Similarly 
$\mathit{find}_B (<L^0_B(E,\neg H) = l>)$ represents the action of $B$ coming to the understanding that $P^0(E \midd \neg H)$
 should be given credence $l$. The action $\mathit{confirm}(<P_B (E)=1>)$ represents $B$'s becoming aware that $E$ is true.

$B$ can carry out its communicative task towards $A$  in many ways. Below only two options for the behaviour of $B$ will be considered, 
thereby limiting attention to the transfer of a likelihood ratio. 

\subsection{Single message reporting: at odds with subjective probability?}
\label{singlemessageR}
In a single message $B$ may reports to $A$.\footnote{%
It is assumed that a message may also contain  explanatory text, but that part of the content is 
ignored at the level of abstraction envisaged in this paper.} 
The report is transferred by a single asynchronous send action:\footnote{%
In Willis et al.\ \cite{Willis2015} (ENFSI guideline for evaluative reporting in forensic science) 
extensive mention is made of the imperative that a likelihood ratio must be included in the report of a forensic expert.  
It is suggested that evidence is not part of FE reporting.  It is not obvious from this guideline if it advises (in the simplest case) 
MOE to make use of to what I am calling single message reporting.}
\[\mathit{snd}_{B \to A}(<LR^0_B(E,H,\neg H) = r \, \& \,P_\MOE(E) = 1>)\]

This behaviour of $B$ is an abstraction of the case that the expert (MOE) reports in a single document, 
while a subsequent interview by $A$  at best produces clarification, 
the outcome of which is not reflected in a probabilistic transformation. 
Under these assumptions the following conclusions can be drawn:
\begin{enumerate}
\item At the time of sending $B$ is at risk not to be reporting its current  beliefs. Indeed if $P_B(E)=1$, a state of affairs 
which $B$ is confirming and transmitting to $A$, then unavoidably also 
$r= \LR_B(E,H) = 1 $ (unless $P_{B}(H)=0$, which is a marginal case). So the act of 
sending a likelihood ratio to $A$ is redundant in this case, unless $B$ is in fact reporting a mix of current and past beliefs. 
Moreover it is assumed 
that $B$ becoming aware of $P_B(E) = 1$ occurred strictly after
$B$'s internal action of  becoming aware of the value of both likelihoods that make up the likelihood ratio reported as $r$.
\item To the extent that subjective belief theory insists that agents communicate the beliefs they are actually holding,
subjective belief theory cannot explain or justify the behaviour of $B$. 
\item Another theory of belief and probability is required for explaining single message reporting by $B$. For instance the use of temporal logic. 
 However, it must be noticed that the combination of timing with
combination with conditionalization (e.g.\ Jeffrey conditionalization) is a strikingly difficult topic.
Weisberg's paradox (Weisberg~\cite{Weisberg2009} and the analysis of it in Huber~\cite{Huber2014} 
provide an indication of the complications involved.
\end{enumerate}

\subsection{Multiple message reporting:  subjective probability compliance?}
Instead of issuing a single message $B$ may transmit two or more consecutive messages. We consider the case that
the first message contains a likelihood pair 
\[\mathit{snd}_{B \to A}(<L^0_B(E,H) = l \,\&\,L^0_B(E,\neg H) = l^\prime >)\]
 and the second message, to be sent after the first message has been received by $A$, 
  consists of an  an assertion of evidence:
  \[\mathit{snd}_{B \to A}(<P_B(E) = 1>)\]
It is understood that when the first message is sent, $B$ has not yet become aware that $P_B(E) = 1$. 
So that in both cases $B$ is reporting 
consistently with their actual beliefs.
  
 At the receiving side $A$ can process the first message from $B$ by way of a double 
 likelihood Adams transformation, 
 and upon receiving the second incoming message  
 from $B$, $A$  can  proceed with Bayesian conditioning without proposition kinetics.
 
 Under the assumption that (as understood from the perspective subjective probability theory) 
 an update of a likelihood pair as received from $B$ is adequately 
 reflected by means of a corresponding double likelihood Adams transformation, 
 it is the case that double message reporting as described above
  is compliant at both sides with the requirements of subjective probability theory.\footnote{%
On the transfer of likelihood pairs: Robertson, Vignaux \& Berger in~\cite{RobertsonVB2011} (p.\ 447) 
indicate that it has become standard (in paternity cases) that a likelihood ratio is conveyed in 
addition to the underlying likelihood pair.  Morisson \& Enzinger~\cite{MorrisonE2016} 
suggest to distinguish between Bayes factor and likelihood ratio and both notions 
may have disparate relations with the respective underlying pairs.}

\section{Concluding remarks: physical probabilities for MOE }
In  forensic logic there seems to an unbridgeable discrepancy between physical probabilities\footnote{%
In Strevens~\cite{Strevens2012} this terminology  is discussed in some detail. 
Frequentistic probabilities are subsumed under the category of physical probabilities in~\cite{Strevens2012}.} 
and subjective probabilities. Nevertheless, mathematically speaking these two mechanisms for measuring uncertainty are quite similar. 

Probability has both subjective and frequentistic (physical) aspects. There is no reason 
to expect or to wish that one of these two views (or any of the views in between that have been developed thus far) would, 
on the basis of philosophical reflection alone, emerge as a dominant perspective on the concept of probability.

In forensic logic, and more generally wherever LRTMR  may apply, 
there appears to be room for both perspectives on probability. This position  intentionally
leaves room for worries such as formulated in Risinger~\cite{Risinger2012} (p.\ 9)  ``that
likelihood ratios would be guessed because of the permissiveness for substituting opinion for fact
which subjectivism would allegedly grant a forensic expert''. Distrust of the notion of subjective probability may 
promote the idea that a frequentist viewpoint provides a self-explanatory 
conceptual framework. That position, however, is an illusion, 
as is witnessed by the circumstance that even gaining an understanding of the  fair binary coin 
leads to significant theoretical complexity (e.g.\ in Belot~\cite{Belot2013}).
According to some authors physical probabilities may be taken for subjective beliefs, see e.g. 
Weatherson~\cite{Weatherson2016} who details ``Lewis' new principle''.
This assertion rephrases the older Lewis principal principle, which asserts that a chance 
(probability in the sense of a frequency) may be taken for a belief of equal degree. 

It may be claimed that whenever MOE is asked to produce a numerical value, e.g. a likelihood ratio,
 an attempt must be made determine the value in an objective manner, leaving little room for the introduction of subjective beliefs. 
 Then notions of precision and accuracy enter the picture, 
as well as the intuition that some ``real'' quantity is being measured. Such an intuition can be accomodated
by taking for, say, a likelihood ratio $r=LR^0_{MOE}(E,H,\neg H)$  a rational number written in 
decimal notation without repetition, and given within an interval 
$[r -\epsilon, r+ \epsilon]$, with $\epsilon$ a positive rational number. 
The number of decimals of $r$ is a measure of accuracy, while $\epsilon$ measures precision. 
A disadvantage of this approach, however, is that it is not obvious how to apply the various probability transformations 
when intervals instead of precise values are to be dealt with.

Although statistical processing at MOE side of a collection of belief functions may result in a valuable
data reduction, the resulting outputs, such as values within an interval are not obviously compatible with the 
probability calculus at hand. Instead of doing statistics before handing over the data to TOF, 
MOE may transfer to TOF a representative sample of data (for instance a sample of values for the same likelihood ratio)  
so that TOF may itself perform statistical processing in a later stage of its activity. 

\subsection{Multiple LR reporting by MOE}
A compromise between expecting MOE to report a single precise likelihood ratio, and to ask for an interval, 
is to propose MOE to report a 
collection of candidate likelihood ratios rather than a single one. 
It will be expected that the collection contains its average value as a member, so that, when asked a unique
likelihood ratio can be chosen as the preferred precise and definite result. 
Another message type is required. For each $E, H$ and $n \in \mathbb{N}^+$ there is a message type:
\[<LR^0_\MOE(E,H,\neg H)\in\{ r_1, \ldots,r_n\} >\]
The meaning of this message is that according to MOE each of $r_1,\ldots,r_n$ is a plausible value for $LR^0_B(E,H,\neg H)$. 
An implicit message is that TOF may look at this collection in statistical terms. 

When processing a multi-LR message TOF 
will transform its belief according to each of the 
values thereby obtaining a collection of probability functions rather than a single one. If TOF is already maintaining a collection of 
belief functions then each of these is transformed in correspondence with each of the LR values in the message, and a possibly larger collection of
probability distributions results. Working with collections of belief functions rather than with a single belief function comes under several names. As an approach it 
introduces ignorance on top of uncertainty.

\subsection{Imprecise beliefs for TOF}
\label{imprecise}
Remarkably the central tenets of 
 belief revision theory as incorporated in AGM style belief revision have 
not found a noticeable audience in forensics. 
Probabilistic AGM theory, which may be of use in forensics
has been developed in Voorbraak~\cite{Voorbraak1996} and in Suzuki~\cite{Suzuki2005}, and is ready for use. 

Imprecise belief modeled by way of non-singleton representors can be traced back to Keynes (see Weatherson~\cite{Weatherson2002}), 
and features explicitly in~Levi~\cite{Levi1974} with subsequent work in e.g.\ Voorbraak~\cite{Voorbraak1996}, 
Weatherson~\cite{Weatherson2007,Weatherson2015} and Rens~\cite{Rens2016}.\footnote{%
Verbal likelihood ratio scales
(see Marquis et al.~\cite{MarquisEA2016}) seem to constitute an approach based on imprecise values, 
but the authors strongly insist that verbal scales must not be understood or used in  that manner.} 
Imprecise belief is supposed to enable the incorporation of ignorance into a framework primarily meant to deal with uncertainty.  
Biedermann~\cite{Biedermann2015} provides a recent exposition and justification of 
the viewpoint that ignorance is merely a variation on the theme of uncertainty, at least in the context of forensic reasoning.

Nevertheless, the restriction to precise belief states is increasingly considered to constitute a source of 
practical problems by authors in forensic science. 
For instance in Morisson \& Enzinger~\cite{MorrisonE2016} and in 
Sjerps et al.~\cite{SjerpsEA2016} the case is made that a likelihood ratio ought to be reported by an MOE
as a value equipped with resolution and precision. 

\subsection{Necessity of background knowledge management for TOF}
Suppose that $0< P_\TOF(H)<1$ and that MOE sends TOF a single message report:
\[\mathit{snd}_{\MOE \to \TOF}(<LR^0_\MOE(E,H,\neg H) = r \, \& \,P_\MOE(E) = 1>)\]
with $0<r$ to which TOF, for whom $E$ is outside its proposition space, reacts by transforming its belief state to $P^\prime_\TOF$ and in 
particular its belief in $H$ according to the familiar equation:
\[ P^\prime_\TOF(H) =\frac{r \cdot P_\TOF(H)}{1 + (r-1) \cdot P_\TOF(H) } \]

Then it follows from the above assumptions that $0 <P^\prime_\TOF(H) < 1$. 
Moreover it is implied that  $r-1 > (r-1) \cdot P_\TOF(H)$, so that $r >1+ (r-1) \cdot P_\TOF(H)$ and 
therefore $P^\prime_\TOF(H) > P_\TOF(H)$. 

Now one may imagine that the same process is repeated, that is TOF receives
the same message from MOE and once more pursues the same transformation, thus obtaining $P^{\prime\prime}_\TOF(H) > P^\prime_\TOF(H)$.
It follows that by repeatedly conditioning on the same evidence, w.r.t. the same likelihood ratio, 
an increasing sequence of probabilities is found for TOF's belief in $H$ with $1$ as its limit.

This phenomenon must be prevented. There is no other option than that TOF maintains a historic record of the probabilistic transformations
which have led to its current belief state, and makes sure that each of these transformations are sufficiently independent. At the background TOF maintains 
a proposition space which incorporates all propositions that have been used as parameters for a probabilistic transformation. 
This is a crucial knowledge management task, the details of which lie outside the realm probability theory, however.

Perhaps more importantly, one notices that the state of knowledge of TOF cannot exclusively be modeled by means of a single belief state and 
that temporal information enters the scene, including historical information about past beliefs.

\subsection{Incomplete belief states for TOF}
The idea that a belief state $(S_A,P_A)$ for $A$ is a partial belief state rests upon the notion that only some of $A$'s beliefs are represented by propositions
in $S_A$. However, a belief state may be incomplete in another manner if its probability distribution is a partial function. We will call such belief states incomplete. 
Thus partiality and incompleteness are logically independent properties of a belief state. Suppose $(S_A,P^\star_A)$ is an incomplete belief state where the superscript 
of $P^\star_A$ indicates that it may be a partial function. Classically an incomplete belief state corresponds to a belief state with a non-singleton representor: 
$$P^{\star\star}_A =\{P_A\colon S_A \to \mathbb{Q}_0 \midd P^\star_A \subseteq P_A\}$$
Allowing TOF to maintain an incomplete belief state is a special case of working with imprecise probabilities.
It may be the case, however,  that $P^{\star\star}_A = \emptyset$ in which the incomplete belief state may be considered inconsistent. Precisely that situation is a point of departure for quantum logic. The Bell inequalities provide criteria under which an incomplete belief state can or cannot be extended to a complete belief state. 
Following the exposition of de Muynck~\cite{Muynck2002} a connection between such criteria and axioms for probability over a meadow is discussed in~\cite{BergstraP2016}.
Whether or not there is a useful role in forensic logic for incomplete belief states $(S_A,P^\star_A)$ with $P^{\star\star}_A = \emptyset$ remains to be seen.
%

\paragraph{Acknowledgement.}Suggestions made by Kees Middelburg concerning the 
precise definition of various fallacies made me consider Adams conditioning. 
Andrea Haker has been very helpful via our longstanding 
discussions regarding the relevance of a focus on forensic reasoning in the context of 
 teaching forensic science.
I also acknowledge Andrea for temporarily handing over to me, during her period of leave to the 
Amsterdam University of Applied Sciences, the role of program director of the MSc 
Forensic Science at the Faculty of Sciences of the University of Amsterdam. 
I acknowledge Yorike Hartman, now coordinating and organizing  the MSc Forensic Science at UvA, for extensive 
discussions on the myriad of (re)design options for the curriculum of the MSc Forensic Science.
Huub Hardy has provided useful and supportive advice including the suggestion not to be worried 
about the vast distance between the cases dealt with by
forensic science practitioners and the fairly theoretical considerations constituting the focus of my research.
Alban Ponse has been very helpful via his contribution to our joint trial and error prone path towards finding a 
usable presentation of probability calculus in the setting of meadows. 
Alban has also made many useful comments on previous drafts of this paper.

\end{document}